\def\eqref#1{equation~\ref{#1}}
\def\1{\bm{1}}
\def\vmu{{\bm{\mu}}}
\def\va{{\bm{a}}}
\def\vc{{\bm{c}}}
\def\ve{{\bm{e}}}
\def\vf{{\bm{f}}}
\def\vg{{\bm{g}}}
\def\vh{{\bm{h}}}
\def\vr{{\bm{r}}}
\def\vu{{\bm{u}}}
\def\vx{{\bm{x}}}
\def\vy{{\bm{y}}}
\def\mA{{\bm{A}}}
\def\mG{{\bm{G}}}
\def\mH{{\bm{H}}}
\def\mI{{\bm{I}}}
\def\mK{{\bm{K}}}
\def\mM{{\bm{M}}}
\def\mW{{\bm{W}}}
\def\mX{{\bm{X}}}
\def\mZ{{\bm{Z}}}
\def\mSigma{{\bm{\Sigma}}}
\DeclareMathAlphabet{\mathsfit}{\encodingdefault}{\sfdefault}{m}{sl}
\SetMathAlphabet{\mathsfit}{bold}{\encodingdefault}{\sfdefault}{bx}{n}
\newcommand{\E}{\mathbb{E}}
\newcommand{\R}{\mathbb{R}}
\newcommand{\KL}{D_{\mathrm{KL}}}
\def\vfg{{\bm{f}_{G}}}
\def\vgg{{\bm{g}_{G}}}
\def\rightarrowd{{\xrightarrow{d}}}
\def\rightarrowp{{\xrightarrow{p}}}
\DeclareMathOperator*{\transit}{T}
\DeclareMathOperator*{\generator}{A}
\DeclareMathOperator*{\solver}{Solver}
\def\vggb{{\bm{\bar{g}}_{\bar{G}}}}
\def\vfgb{{\bm{\bar{f}}_{\bar{G}}}}
\def\mXb{{\bm{\bar{X}}}}
\def\pb{{\bar{p}}}
\def\mGb{{\bar{\bm{G}}}}
\def\mXd{{\bm{X}^\Delta}}
\def\mXdt{{\bm{X}^\Delta_t}}
\def\mXdb{{\bar{\bm{X}}^\Delta}}
\def\mXdtp{{\bm{X}^\Delta_{t+1}}}
\def\mZb{{\bar{\bm{Z}}}}
\def\mZt{{\tilde{\bm{Z}}}}
\def\Wt{{\tilde{\mW}}}
\def\tii{{t_{i+1}}}
\def\ti{{t_i}}
\def\PaG{{\bm{Pa}_\mG}}
\def\resolvent{{R}}
\newtheorem{theorem}{Theorem}[section]
\newtheorem{lemma}[theorem]{Lemma}
\newtheorem{definition}{Definition}
\newtheorem{assumption}{Assumption}
\newcommand{\vars}{\mX}
\newcommand{\var}{X}
\newcommand{\latentvars}{\mZ}
\newcommand{\latentvar}{Z}
\newcommand{\graph}{\mG}
\newcommand{\driftfn}{\bm{f}}
\newcommand{\wiener}{\bm{W}}
\newcommand{\wienersingle}{W}
\newcommand{\nnparams}{\theta}
\newcommand{\smpidx}{n}
\newcommand{\maxsmpidx}{N}
\newcommand{\varidx}{d}
\newcommand{\maxvaridx}{D}
\newcommand{\timects}{t}
\newcommand{\timeidx}{i}
\newcommand{\maxtimeidx}{I}
\newcommand{\ModelName}{\emph{SCOTCH}}
\newcommand{\intv}{\iota}
\newcommand{\revised}[1]{#1}
\newcommand{\addition}[1]{#1}
\newcommand{\highlight}[1]{#1}
\newcommand{\cameraready}[1]{\textcolor{black}{#1}}
\title{Neural structure learning with stochastic differential equations}
\author{Benjie Wang\thanks{Equal contribution.}  \ \thanks{Correspondence to \texttt{benjiewang@ucla.edu, wenbogong@microsoft.com}} \ \thanks{Work done during an internship at Microsoft Research Cambridge.}\\
University of California, Los Angeles 
\And
Joel Jennings \\
DeepMind 
\And
Wenbo Gong\footnotemark[1] \ \footnotemark[2]\\
Microsoft Research Cambridge
}
\begin{document}

\maketitle

\begin{abstract}
Discovering the underlying relationships among variables from temporal observations has been a longstanding challenge in numerous scientific disciplines, including biology, finance, and climate science. The dynamics of such systems are often best described using continuous-time stochastic processes. Unfortunately, most existing structure learning approaches assume that the underlying process evolves in discrete-time and/or observations occur at regular time intervals.
These mismatched assumptions can often lead to incorrect learned structures and models. 
In this work, we introduce a novel structure learning method, \ModelName{}, which combines neural stochastic differential equations (SDE) with variational inference to infer a posterior distribution over possible structures. 
This continuous-time approach can naturally handle both learning from and predicting observations at arbitrary time points.
Theoretically, we establish sufficient conditions for an SDE and \ModelName{} to be structurally identifiable, and prove its consistency under infinite data limits.
Empirically, we demonstrate that our approach leads to improved structure learning performance on both synthetic and real-world datasets compared to relevant baselines under regular and irregular sampling intervals. 

\end{abstract}

\section{Introduction}
\label{sec: Introduction}

Time-series data is ubiquitous in the real world, often comprising a series of data points recorded at varying time intervals. Understanding the underlying structures between variables associated with temporal processes is of paramount importance for numerous real-world applications \citep{spirtes2000causation, berzuini2012causality, peters2017elements}. Although randomised experiments are considered the gold standard for unveiling such relationships, they are frequently hindered by factors such as cost and ethical concerns. Structure learning seeks to infer hidden structures from purely observational data, offering a powerful approach for a wide array of applications \citep{bellot2021neural, lowe2022amortized,runge2018causal,tank2021neural,pamfil2020dynotears,gong2022rhino}.

However, many existing structure learning methods for time series are discrete, assuming that the underlying temporal processes are discretized in time and requiring uniform sampling intervals. Consequently, these models face two limitations: (i) they may misrepresent the true underlying process when it is continuous, potentially leading to incorrect inferred relationships; and (ii) they struggle with handling irregular sampling intervals, which frequently arise in many fields \citep{trapnell2014dynamics, qiu2017reversed, qian2020learning} and climate science \citep{bracco2018advancing, raia2008causality}. 

To address these challenges, we introduce a novel framework, \textbf{S}tructure learning with \textbf{CO}ntinuous-\textbf{T}ime sto\textbf{CH}astic models
(\ModelName{}\footnote{\url{https://github.com/microsoft/causica/tree/main/research_experiments/scotch}}), which \cameraready{employs} \emph{stochastic differential equations} (SDEs) for structure learning in temporal processes.
\ModelName{} can \cameraready{naturally} handle irregularly sampled time series and accurately represent \cameraready{and learn} continuous\cameraready{-time} processes. We make the key contributions:
\begin{enumerate}
    \item We introduce a novel latent Stochastic Differential Equation (SDE) formulation for modelling 
    \cameraready{structure in} continuous-time observational time-series data. To effectively train our proposed model, which we denote as \ModelName, we adapt the variational inference framework proposed in \citep{li2020scalable, tzen2019neural} to approximate the posterior for both the underlying graph structure and the latent variables. \cameraready{We show that, in contrast to a prior approach using ordinary differential equations (ODEs) \citep{bellot2021neural}, our model is capable of accurately learning the underlying dynamics from trajectories exhibiting multimodality and with non-Gaussian distribution.}
    
    \item We provide a rigorous theoretical analysis to support our proposed methodology. Specifically, we prove that when SDEs are directly employed for modelling the observational process, the resulting SDEs are structurally identifiable under global Lipschitz and diagonal noise assumptions. We also prove our model maintains structural identifiability under certain conditions, even when adopting the latent formulation; and that variational inference, when integrated with the latent formulation, in the infinite data limit, can successfully recover the ground truth graph structure and mechanisms under specific assumptions.
    
    \item Empirically, 
    we conduct extensive experiments on both synthetic and real-world datasets \cameraready{showing that} \ModelName{} can improve upon existing methods on structure learning, including when the data is irregularly sampled.
    
\end{enumerate}
\section{Preliminaries}
\label{sec: Preliminary} 

\cameraready{In the rest of this paper,} we use $\vars_t\in\R^D$ to denote the $D$-dimensional observation vector at time $t$, with $X_{t,d}$ representing the $d^{\text{th}}$ variable of the observation. A time series is a set of $I$ observations $\vars = \{\vars_{t_i}\}_{t=1}^I$, where $\{t_i\}_{i=1}^{I}$ are the observation times. 
In the case where we have multiple ($\maxsmpidx$) i.i.d. time series, we use $\vars^{(n)}$ to indicate the $n^{\text{th}}$ time series. 
\paragraph{Bayesian structure learning} In structure learning, the aim is to infer \cameraready{the underlying} graph representing the relationships between variables from data. \cameraready{In the Bayesian approach, given time series data $\{\vars^{(\smpidx)}\}_{\smpidx = 1}^{\maxsmpidx}$, we define a joint distribution over graphs and data given by:}
\begin{equation}
    p(\mG,\mX^{(1)},\ldots, \mX^{(\maxsmpidx)}) = p(\mG)\prod_{\smpidx=1}^{\maxsmpidx} p(\mX^{(\smpidx)}|\mG)
    \label{eq: joint distribution}
\end{equation}
where $p(\mG)$ is the graph prior and $p(\vars^{(\smpidx)}|\mG)$ is the likelihood term. The goal is then to compute the graph posterior $p(\mG|\cameraready{\mX^{(1)},\ldots \mX^{(\maxsmpidx)}})$. However, analytic computation is intractable in high dimensional settings. Therefore, variational inference \citep{zhang2018advances} and sampling methods \citep{welling2011bayesian, gong2018meta,annadani2023bayesdag} are commonly used for inference. 

\paragraph{Structural equation models (SEMs)} Given a time series $\vars$ and graph $\mG\in \{0,1\}^{D\times D}$, we can use SEMs to describe the structural relationships between variables: 
\begin{equation}
    X_{t,d} = f_{t,d}(\PaG^d(<t),\epsilon_{t,d})
    \label{eq: temporal SEM}
\end{equation}
where $\PaG^d(<t)$ specifies the lagged parents of $X_{t,d}$ at previous time and $\epsilon_{t,d}$ is the mutually independent noise. Such a model requires discrete time steps that are usually assumed to follow a regular sampling interval, i.e. $t_{i+1}-t_i$ is a constant for all $i = 1,\ldots,I - 1$.
%and typically assumes a regular sampling interval. 
Most existing models can be regarded as a special case of this framework.

\paragraph{It\^o diffusion}
A time-homogenous It\^o diffusion is a stochastic process $\mX_t$ 
and has the form:
\begin{equation}
    d\mX_t = \vf(\mX_t)dt +\vg(\mX_t)d\mW_t
    \label{eq: ito diffusion no graph}
\end{equation}
where $\vf:\R^D\rightarrow \R^D,\vg:\R^D \rightarrow \R^{D\times D}$ are time-homogeneous drift and diffusion functions, respectively, and $\mW_t$ is a Brownian motion under the measure $P$. It is known that under global Lipschitz guarantees (Assumption \ref{assump: Global Lipschitz}) this has a unique strong solution \citep{oksendal2003stochastic}.

\paragraph{Euler discretization and Euler SEM}
For most It\^o diffusions, the analytic solution $\mX_t$ is intractable, especially with non-linear drift and diffusion functions. Thus, we often seek to simulate the trajectory by discretization. One common \cameraready{discretization method} is the \emph{Euler-Maruyama} (EM) scheme. With a fixed step size $\Delta$, EM simulates the trajectory as 
\begin{equation}
    \mXdtp = \mXdt+\vf(\mXdt)\Delta + \vg(\mXdt)\eta_t
    \label{eq: EM updates no graph}
\end{equation}
where $\mXdt$ is the random variable induced by discretization and $\eta_t\sim \mathcal{N}(0,\Delta)$. Notice that \cref{eq: EM updates no graph} is a special case of \cref{eq: temporal SEM}. If we define the graph $\mG$ as the following: 
$\mX^\Delta_{t,i}\rightarrow \mX^\Delta_{t+1,j}$ in $\mG$ iff $\frac{\partial f_j(\mXdt)}{\partial X^\Delta_{t,i}}\neq 0$ or $\exists k, \frac{\partial g_{j,k}(\mXdt)}{\partial X^\Delta_{t,i}}\neq 0$; 
and assume 
$\vg$
only outputs a diagonal matrix, then the above EM induces a temporal SEM, called the \emph{Euler SEM} \citep{hansen2014causal}, which provides a useful analysis tool for continuous time processes.

\section{\ModelName{}: Bayesian Structure Learning for Continuous Time Series}
\label{sec: Methodology}
We consider a dynamical system in which there is both intrinsic stochasticity in the evolution of the state, as well as independent measurement noise that is present in the observed data. 
For example, in healthcare, the condition of a patient will progress with randomness rather than deterministically. 
\cameraready{Further,}
the measurement of patient condition will also be affected by the accuracy of the equipment, where the noise is independent to the intrinsic stochasticity. To account for this behaviour, we propose to use the latent SDE formulation \citep{li2020scalable, tzen2019neural}:
\begin{align}
    d\mZ_t&=\vf_{\nnparams}(\mZ_t)dt + \vg_{\nnparams}(\mZ_t)d\mW_t \text{ (latent process)}\nonumber\\
    \mX_t &= \mZ_t + \bm{\epsilon}_t \text{ (noisy observations)}
    \label{eq: CRhino datamodel}
\end{align}
where $\mZ_t\in\R^D$ is the latent variable representing the internal state of the dynamic system, $\mX_t\in \R^D$ describes the observational data with the same dimension, $\bm{\epsilon_t}$ is additive \cameraready{dimension-wise independent noise}, $\vf_{\nnparams}:\R^D \rightarrow \R^D$ is the drift function, $\vg_{\nnparams}:\R^D \rightarrow \R^{D \times D}$ is the diffusion function and $\mW_t$ is the Wiener process.  \revised{For \ModelName{}, we require the following two assumptions:}
\revised{
\begin{restatable}[Global Lipschitz]{assumption}{AssumpOne}
We assume that the drift and diffusion functions in \cref{eq: CRhino datamodel} satisfy the global Lipschitz constraints. Namely, we have
\begin{equation}
    \vert \vf_{\nnparams}(\vx) -\vf_{\nnparams}(\vy) \vert + \vert \vg_{\nnparams}(\vx)-\vg_{\nnparams}(\vy) \vert \leq C \vert \vx-\vy\vert
    \label{eq: global lipschitz}
\end{equation}
for some constant $C$, $\vx, \vy \in \R^D$ and $\vert\cdot\vert$ is the corresponding $L_2$ norm for vector-valued functions and matrix norm for matrix-valued functions.
\label{assump: Global Lipschitz}
\end{restatable}
\begin{restatable}[Diagonal diffusion]{assumption}{DiagDiff}
    We assume that the diffusion function $\vg_\theta$ outputs a non-zero diagonal matrix. That is, it can be simplified to a vector-valued function $\vg_\theta(\mX_t):\R^D\rightarrow \R^D$. 
\label{assump: diagonal diffusion}
\end{restatable} 
The former is a standard assumption required by most SDE literature to ensure the existence of a strong solution. The key distinction is the latter assumption of a nonzero diagonal diffusion function, $\vg_{\nnparams}$, rather than a full diffusion matrix, enabling structural identifiability as we show in the next section. Please refer to \cref{subapp: definitions} for more detailed explanations.} 
Now, in accordance with the graph defined in Euler SEMs (\cref{sec: Preliminary}), we define the \cameraready{\emph{signature graph}} $\mG$ as follows: edge $i\rightarrow j$ is present in $\mG$ iff~ $\exists t$ s.t.~either $\frac{\partial f_{j}(\mZ_t)}{\partial Z_{t,i}}\neq 0$ or $\frac{\partial g_{j}(\mZ_t)}{\partial Z_{t,i}}\neq 0$. Note that there is no requirement for the graph to be acyclic. Intuitively, the graph $\mG$ describes the structural dependence between variables.

\cameraready{We now instantiate our Bayesian structure learning scheme with prior and likelihood components:}
\paragraph{Prior over Graphs} Leveraging \cite{geffner2022deep, annadani2023bayesdag}, our graph prior is designed as:
\begin{equation}
    p(\mG)\propto\exp(-\lambda_s\Vert\mG\Vert_F^2)
    \label{eq: graph prior}
\end{equation}
where $\lambda_s$ is the graph sparsity coefficient, and $\Vert \cdot \Vert_F$ is the Frobenius norm. 

\paragraph{Prior process}
Since the latent process induces a distribution over latent trajectories \cameraready{$p_\theta(\mZ)$} before seeing any observations, we also call it the prior process. 
We propose to use neural networks for drift and diffusion functions $\vf_{\nnparams}:\R^D\times \{0,1\}^{D\times D} \rightarrow \R^D$, $\vg_{\nnparams}:\R^D\times \{0,1\}^{D\times D} \rightarrow \R^D$, which explicitly take the latent state and the graph as inputs. Though the signature graph is defined through the function derivatives, we explicitly use the graph $\mG$ as input to \cameraready{enforce the constraint}.
We will interchangeably use the notation $\vfg$ and $\vgg$ to denote $\vf_\theta(\cdot,\mG)$ and $\vg_\theta(\cdot,\mG)$. For the graph-dependent drift and diffusion, we leverage the design of \cite{geffner2022deep} and propose: 
\begin{equation}
    \vf_{G,d}(\mZ_t) = \zeta\left(\sum_{i=1}^D G_{i,d}l(Z_{t,i}, \ve_{i}), \ve_{d}\right)
    \label{eq: drift design}
\end{equation}
for both $\vfg$ and $\vgg$, where $\zeta$, $l$ are neural networks, and $\ve_i$ is a trainable node embedding for the $i^{\text{th}}$ node. The corresponding prior process is:
\begin{align}
    d\mZ_t&=\vf_\theta(\mZ_t,\mG)dt + \vg_\theta(\mZ_t,\mG)d\mW_t \text{ (prior process)}
    %\mX_t &= \mZ_t + \bm{\epsilon}_t \text{ (observation noise)}
    \label{eq: CRhino formulation}
\end{align}

\paragraph{Likelihood of time series} 
Given a time series $\mX = \{\vars_{t_i}\}_{i=1}^I$, the likelihood is defined as 
\cameraready{
\begin{equation}
    p(\{\vars_{t_i}\}_{i=1}^I\vert \{\mZ_{t_i}\}_{i=1}^I,\mG)=\prod_{i=1}^I\prod_{d=1}^D p_{\epsilon_d}(X_{t_i, d} - Z_{t_i,d})
    % p(\{\vars_{t_i}\}_{i=1}^I\vert \{\mZ_{t_i}\}_{i=1}^I,\mG)=\prod_{i=1}^I\prod_{d=1}^D\mathcal{N}(X_{t_i, d}; Z_{t_i,d},\sigma_{t_i,d}^2)
    \label{eq: observation likelihood}
\end{equation}
}
where $p_{\epsilon_d}$ is the observational noise distribution for the $d^{\textnormal{th}}$ dimension.

\subsection{Variational Inference}
\label{subsec: variational inference}
Suppose that we are given multiple time series $\{\vars^{(n)}\}_{n=1}^N$ as observed data from the system. The goal is then to compute
the posterior over graph structures $p(\mG|\{\vars^{(n)}\}_{n=1}^N)$, which is intractable. Thus, we leverage variational inference to simultaneously approximate both the graph posterior, and a latent posterior process over $\mZ^{(\smpidx)}$ for every observed time series $\vars^{(n)}$.

Given $N$ i.i.d~time series $\{\mX^{(n)}\}_{n=1}^N$, we propose to use  a variational approximation $q_\phi(\mG)\approx p(\mG\vert \mX^{(1)},\ldots, \mX^{(N)})$. With the standard trick from variational inference, we have the following evidence lower bound (ELBO):
\begin{equation}
    \log p(\mX^{(1)},\ldots, \mX^{(N)}) \geq \E_{q_\phi(\mG)}\left[\sum_{n=1}^N\log p_\theta(\mX^{(n)}\vert \mG)\right] - \KL(q_\phi(\mG)\Vert p(\mG))
    \label{eq: ELBO 1}
\end{equation}
Unfortunately, the distribution $p_\theta(\mX^{(n)}\vert \mG)$ remains intractable due to the marginalization of the latent It\^o diffusion $\mZ^{(n)}$. Therefore, we leverage the variational framework proposed in
\cite{tzen2019neural,li2020scalable} to approximate the true posterior $p(\latentvars^{(\smpidx)} | \vars^{(\smpidx)}, \mG)$. For each $n=1,\ldots,N$, the variational posterior $q_{\psi}(\tilde{\latentvars}^{(\smpidx)} | \vars^{(\smpidx)}, \mG)$ is defined as the solution to the following:
\begin{align}
    \mZt_{t, 0}^{(n)} \sim \mathcal{N}(\vmu_\psi(\graph, \mX^{(n)}), \mSigma_\psi(\graph, \mX^{(n)})) \;\;\;\text{(posterior initial state)}\nonumber\\ 
    d\mZt_t^{(n)}= \vh_{\psi}(\mZt_t^{(n)},t; \graph, \mX^{(n)})dt+\vgg(\mZt_t^{(n)})d\mW_t \;\;\;\text{(posterior process)}
    \label{eq: posterior process}
\end{align}
For the initial latent state, $\vmu_\psi, \mSigma_\psi$ are posterior mean and covariance functions implemented as neural networks. For the SDE, we use the same diffusion function $\vgg$ for both the prior and posterior processes, but train a separate neural drift function $\vh_{\psi}$ for the posterior, which takes a time series $\mX^{(n)}$ as input. The posterior drift function differs from the prior in two key ways. Firstly, the posterior drift function depends on time; this is necessary as conditioning on the observed data creates this dependence even when the prior process is time-homogenous. Secondly, while $\vh_{\psi}$ takes the graph $\mG$ as an input, the function design is not constrained to have a matching signature graph like $\vfg$. More details on the implementation of $\vh_{\psi}, \vmu_\psi, \mSigma_\psi$ can be found in Appendix \ref{app: model architecture}.

Assume for each time series $\mX^{(n)}$, we have observation times $t_i$ for $i=1,\ldots, I$ within the time range $[0,T]$, then, we have the following evidence lower bound for $\log p(\mX^{(n)}\vert\mG)$ \citep{li2020scalable}:
\begin{equation}
    \log p(\mX^{(n)}\vert\mG)\geq \E_{q_{\psi}}\left[\sum_{i=1}^I \log p(\mX^{(n)}_{t_i}\vert \mZt^{(n)}_{t_i},\mG)-\int_0^T \Vert \vu^{(n)}(\mZt^{(n)}_t)\Vert^2dt\right]
    \label{eq: ELBO 2}
\end{equation}
where $\mZt^{(n)}_t$ is the posterior process modelled by \cref{eq: posterior process} and $\vu^{(n)}(\mZt^{(n)}_t)$ is given by:
\begin{equation}
    \vu^{(n)}(\mZt^{(n)}_t)=\vgg(\mZt_t^{(n)})^{-1}(\vh_{\psi}(\mZt_t^{(n)}, t; \mG, \mX^{(n)})-\vf_{G}(\mZt_t^{(n)}))
    \label{eq: u}
\end{equation}

By combining \cref{eq: ELBO 1} and \cref{eq: ELBO 2}, we derive an overall ELBO:
\begin{align}
    \log p_\theta(\mX^{(1)},\ldots, \mX^{(N)}) \geq& \E_{q_\phi}\left[\sum_{n=1}^N \E_{q_\psi}\left[\sum_{i=1}^I \log p(\mX_{t_i}^{(n)}\vert \mZt_{t_i}^{(n)},\mG)-\int_0^T \Vert\vu^{(n)}(\mZt_t^{(n)})\Vert^2dt\right]\right]\nonumber\\
    &-\KL(q_\phi(\mG)\Vert p(\mG))
    \label{eq: Overall ELBO}
\end{align}
In practice, we approximate the ELBO (and its gradients) using a Monte-Carlo approximation. The inner expectation can be approximated by simulating from an augmented version of \cref{eq: posterior process} where an extra variable $L$ is added with drift $\frac{1}{2}\vert \vu^{(n)}(\mZt^{(n)}_t)\vert^2$ and diffusion zero \citep{li2020scalable}. \Cref{alg: CRhino training} summarizes the training algorithm of \ModelName{}.

\begin{algorithm}[tb]
\small
\caption{\ModelName{} training}
\label{alg: CRhino training}

\begin{algorithmic}
\STATE {\bfseries Input:} i.i.d time series $\{\mX^{(n)}\}_{n=1}^N$; drift functions $\vfg$, $\vh_{\psi}$, diffusion function $\vgg$, SDE solver $\solver$, initial condition $\mZt_0^{(n)}$, training iterations L
\FOR{$l=1,\ldots,L$}
    \STATE Sample time series mini-batch $\{\mX^{(n)}\}_{n=1}^S$ with batch size $S$.
    \FOR{$n=1,\ldots,S$}
    \STATE Draw graph $\mG \sim q_\phi(\mG)$
    \STATE Draw initial latent state $\tilde{\latentvars}^{(n)}_{0} \sim \mathcal{N}(\vmu_\psi(\graph, \mX^{(n)}), \mSigma_\psi(\graph, \mX^{(n)}))$
    \STATE Solve (sample from) the posterior process $(\mZt^{(n)}, L) = \solver((\mZt^{(n)}_0, 0),\vfg, \vh_{\psi} ,\vgg)$
    \ENDFOR
     
    \STATE Maximize ELBO \cref{eq: Overall ELBO} w.r.t.~$\phi, \psi, \theta$
\ENDFOR
\end{algorithmic}
\end{algorithm}

\subsection{\cameraready{Stochasticity and Continuous-Time Modeling}}
\label{subsec: comparison to Bellot}

\paragraph{\cameraready{Stochasticity}} \citet{bellot2021neural} proposed a structure learning method, called NGM, to learn from single time series generated by SDEs. NGM uses a neural ODE to model the mean process $\vf_\theta$, and extracts graphical structure from the first layer of $\vf_\theta$. However, NGM assumes that the observed single series $\mX$ follows a multivariate Gaussian distribution, which only holds for linear SDEs. 
If this assumption is violated, optimizing their proposed squared loss cannot recover the underlying system. 
\ModelName{} does not have this limitation and can handle more flexible state-dependent drifts and diffusions. 
Another drawback of NGM is its inability to handle multiple time series ($\maxsmpidx > 1$). Learning from multiple series is important when dealing with SDEs with multimodal behaviour. We propose a simple bimodal 1-D failure case: $d\var = \var d\timects + 0.01 d\wienersingle_{\timects}, \var_0 = 0$, with the signature graph containing a self-loop. Figure \ref{fig:failure_case} shows the bimodal trajectories (upwards and downwards) sampled from the SDE. The optimal ODE mean process in this case is the constant $\vf_\theta=0$ with an empty graph, as confirmed by the learned mean process of NGM (black line in \cref{fig:failure_case_bellot}). In contrast, \ModelName{} can learn the underlying SDE and simulate the correct trajectories (\cref{fig:failure_case_crhino}).

\begin{figure}[tb]
    \centering
    \begin{subfigure}{0.32\linewidth}
    \centering
        \includegraphics[scale=0.35]{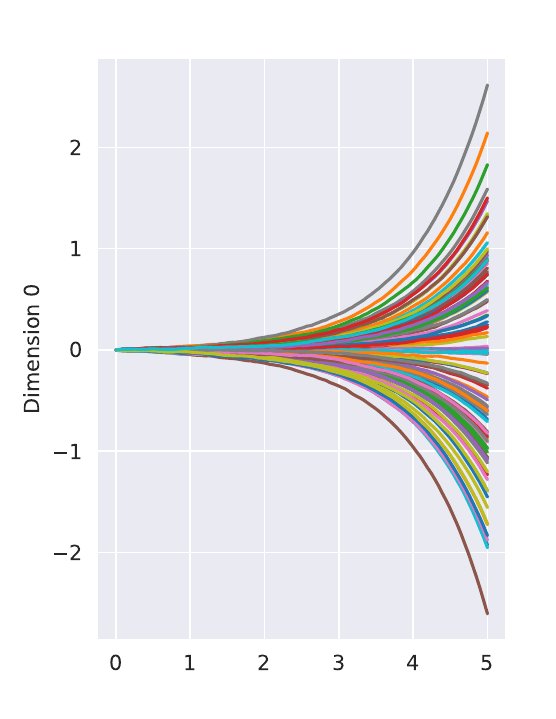}
        \caption{Data}
        \label{fig:failure_case_data}
    \end{subfigure}
    \begin{subfigure}{0.32\linewidth}
    \centering
        \includegraphics[scale=0.35]{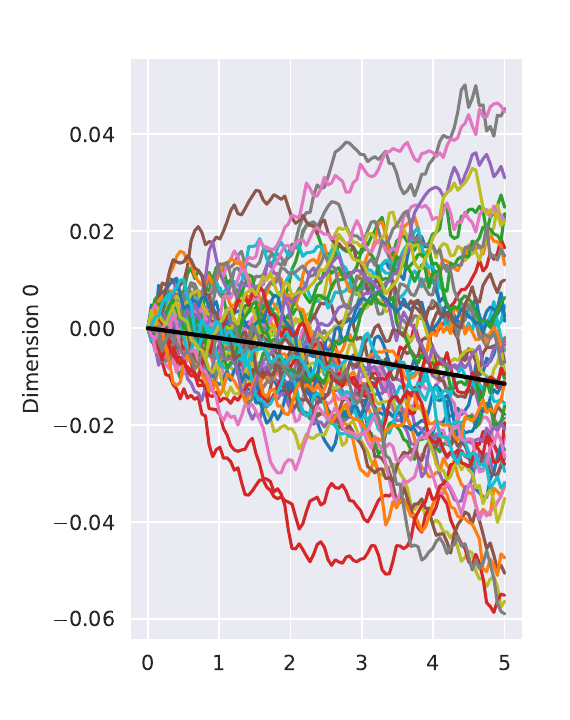}
        \caption{NGM}
        \label{fig:failure_case_bellot}
    \end{subfigure}
    \begin{subfigure}{0.32\linewidth}
    \centering
        \includegraphics[scale=0.35]{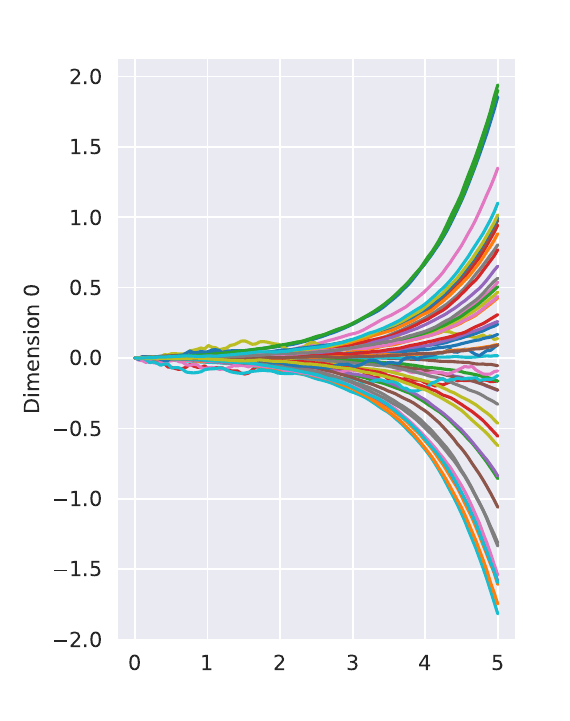}
        \caption{\ModelName{}}
        \label{fig:failure_case_crhino}
    \end{subfigure}
    \caption{Comparison between NGM and \ModelName{} for simple SDE (note vertical axis scale)}
    \label{fig:failure_case}
\end{figure}
\paragraph{\cameraready{Discrete vs Continuous-Time}} \cite{gong2022rhino} proposed a flexible discretised temporal SEM, called Rhino, that is capable of modelling (1) lagged parents; (2) instantaneous effect; and (3) history dependent noise. Rhino's SEM is given by $X_{t,d}=f_d(\PaG^d(<t), \PaG^d(t))+g_d(\PaG^d(<t))\epsilon_{t,d}$. We can clearly see its similarity to \ModelName{}. If $f_d$ has a residual structure as $f_d(\cdot) = X_{t,d} + r_d(\cdot)\Delta$ and \highlight{we assume no instantaneous effect ($\PaG^d(t)$ is empty)}, Rhino SEM is equivalent to the Euler SEM of the latent process (\cref{eq: CRhino formulation}) with drift $\vr$, step size $\Delta$ and diagonal diffusion $\vg$. Thus, similar to the relation of ResNet \citep{he2016deep} to NeuralODE \citep{chen2018neural}, \ModelName{} \cameraready{can be interpreted} as the continuous-time analog of Rhino.

\section{Theoretical considerations of \ModelName{}}
\label{sec: theory main text}

In this section, we aim to answer three important theoretical questions regarding the It\^o diffusion proposed in \cref{sec: Methodology}. For notational simplicity, we consider the single time series setting. First, we examine when a general It\^o diffusion is structurally identifiable. 
Secondly, we consider structural identifiability in the latent formulation of  \cref{eq: CRhino datamodel}. Finally, we consider whether optimising ELBO (\cref{eq: Overall ELBO}) can recover the true graph and mechanism if we have infinite observations of a single time series within a fixed time range $[0,T]$. All detailed proofs, definitions, and assumptions can be found in \cref{app: theory}.

\subsection{Structure identifiability}
\label{subsec: structure identifiability}
Suppose that the observational process is given as an It\^o diffusion:
\begin{equation}
    d\mX_t = \vfg(\mX_t)dt + \vgg(\mX_t)d\mW_t
    \label{eq: Observational process}
\end{equation}
Then, we might ask what are sufficient conditions for the model to be structurally identifiable? 
That is, there does not exist $\mG'\neq \mG$ that can induce the same observational distribution. 

\begin{restatable}[Structure identifiability of the observational process]{theorem}{IdenObser}
    Given \cref{eq: Observational process}, let us define another process with 
$\bar{\mX}_t$, $\mG\neq \bar{\mG}$, $\vfgb$, $\vggb$ and $\bar{\mW}_t$. Then, under Assumptions \ref{assump: Global Lipschitz}-\ref{assump: diagonal diffusion}, and with the same initial condition $\mX(0)=\mXb(0)=\vx_0$, the solutions $\mX_t$ and $\mXb_t$ will have different distributions. 
\label{thm: structure identifiability observation}
\end{restatable}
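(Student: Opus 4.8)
The plan is to argue by contradiction. Suppose $\mXb_t$ has the same law as $\mX_t$ on path space (equivalently, the same finite-dimensional distributions, both processes having continuous paths); I will derive $\mGb = \mG$, contradicting the hypothesis $\mGb \ne \mG$. The key principle is classical: the law of an It\^o diffusion with continuous coefficients determines both its drift and its diffusion coefficient at every state the process can reach, and Assumption~\ref{assump: diagonal diffusion} will force that reachable set to be all of $\R^D$.

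Step 1: recovering $\vfg$ and $\vgg$ from the law. Since $\vfg, \vgg$ are globally Lipschitz (Assumption~\ref{assump: Global Lipschitz}) they are continuous, so for $t>0$ and $x$ in the support of $\mX_t$ the Markov property and the SDE give $\vfg(x) = \lim_{h\downarrow 0} \tfrac1h\,\E[\mX_{t+h} - \mX_t \mid \mX_t = x]$ and $g_{G,d}^2(x) = \lim_{h\downarrow 0}\tfrac1h\,\E[(X_{t+h,d} - X_{t,d})^2 \mid \mX_t = x]$ for each $d$, and the analogous identities hold for $\mXb$. The right-hand sides depend only on the finite-dimensional distributions, which coincide for the two processes, and $\mathrm{supp}(\mX_t) = \mathrm{supp}(\mXb_t)$; hence $\vfg = \vfgb$ and $g_{G,d}^2 = \bar g_{\bar G,d}^2$ ($d = 1, \dots, D$) on $\mathrm{supp}(\mX_t)$, for every $t>0$. (The rigorous version of this step is that equal laws imply equal semimartingale characteristics, or equivalently that the associated martingale problem determines the generator; one tests against $\varphi \in C_c^\infty(\R^D)$ that agree near $x$ with coordinate functions and their pairwise products.)

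Step 2: the reachable set is $\R^D$, and the signature graphs agree. By Assumption~\ref{assump: diagonal diffusion}, $\vgg(x)$ is a nowhere-vanishing diagonal matrix, hence everywhere invertible, so the control equation $\dot\phi = \vfg(\phi) + \vgg(\phi)\dot h$ realizes every smooth path starting at $\vx_0$; by the Stroock--Varadhan support theorem the topological support of $\mathrm{Law}(\mX)$ on path space contains all such paths, so $\mathrm{supp}(\mX_t) = \R^D$ for all $t>0$. Step 1 therefore upgrades to $\vfg \equiv \vfgb$ and $g_{G,d}^2 \equiv \bar g_{\bar G,d}^2$ on all of $\R^D$. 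Since $g_{G,d}$ is nowhere zero, $\partial_i g_{G,d}^2 = 2\,g_{G,d}\,\partial_i g_{G,d}$ vanishes exactly where $\partial_i g_{G,d}$ does, so $\vgg$ and $\vggb$ induce the same diffusion-edges; combined with $\vfg \equiv \vfgb$ (hence identical drift-edges), the signature graphs coincide, i.e.\ $\mGb = \mG$ --- the desired contradiction.

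I expect the delicate point to be the interface between Step 1 and Step 2: one must be sure the diffusion genuinely explores a full-dimensional region, which is precisely where the \emph{nonzero, diagonal} diffusion is indispensable (a degenerate, or merely triangular, diffusion matrix could let a strictly smaller graph reproduce the same law), and one must reconcile the derivative-based definition of the signature graph with the fact that Assumption~\ref{assump: Global Lipschitz} by itself only makes $\vfg, \vgg$ differentiable almost everywhere --- so either a mild extra smoothness is assumed or one argues that equality of $\vfg$ and $\vfgb$ as functions on $\R^D$ already forces equality of their (a.e.-defined) Jacobians and hence of the induced edge sets. The coefficient-recovery in Step 1 is otherwise standard, and the residual sign ambiguity $g_{G,d} = \pm\,\bar g_{\bar G,d}$ is irrelevant because it does not change which partial derivatives vanish.
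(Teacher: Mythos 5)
Your proposal is correct in substance but takes a genuinely different route from the paper's. You argue the contrapositive: equal path laws force equal coefficients (conditional-moment limits, rigorously equality of semimartingale characteristics / the martingale problem), and then the nowhere-vanishing diagonal diffusion plus a Stroock--Varadhan-type support argument upgrades $\vfg=\vfgb$, $g_{G,d}^2=\bar g_{\bar G,d}^2$ from the support of the process to all of $\R^D$, from which equality of the signature graphs follows and contradicts $\mG\neq\mGb$. The paper instead works forward: Lemma \ref{lem: Identifiability of Euler SEM} shows the Gaussian one-step Euler transition kernels identify $(\mG,\vfg,\vert\vgg\vert)$, Lemma \ref{lem: generator and Euler SEM} transfers this to the infinitesimal generator through the explicit formula $\generator h=\sum_d f_d\,\partial_d h+\tfrac{1}{2}\sum_d g_d^2\,\partial_d^2 h$, and Lemma \ref{lem: generator determine semigroup} concludes via resolvents and uniqueness of the Laplace transform that distinct generators yield distinct Feller semigroups, hence distinct laws. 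Your route buys something the paper leaves implicit in its final step: since the two laws are compared from the single initial condition $\vx_0$, coefficient differences are only visible on states the process actually reaches, and your Step 2 supplies exactly the full-support argument needed there; your treatment of a.e.\ differentiability of Lipschitz coefficients when reading off the graph is also more careful than the paper's pointwise use of partial derivatives. The paper's route, in exchange, never needs a support theorem (it compares transition kernels at every state $\va$), stays within Feller-semigroup machinery guaranteed by Assumption \ref{assump: Global Lipschitz}, and reuses the Euler-SEM lemma directly in the proof of Theorem \ref{thm: identifiability of latent SDE}. One caveat on your side: the classical Stroock--Varadhan theorem is stated for diffusion coefficients smoother than merely Lipschitz (the Stratonovich correction involves derivatives of $\vgg$), and Assumption \ref{assump: diagonal diffusion} gives pointwise non-degeneracy rather than uniform ellipticity, so you should either invoke a Lipschitz-compatible irreducibility/full-support result for everywhere non-degenerate diagonal diffusions or flag the extra regularity; this is a rigor refinement rather than a flaw in the idea.
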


Next, we show that structural identifiability is preserved, under certain conditions, even in the latent formulation where the SDE solution is not directly observed.
\begin{restatable}[Structural identifiability with latent formulation]{theorem}{IdenLatent}

Consider the distributions $p, \pb$ defined by the latent model in  \cref{eq: CRhino datamodel} with $(\mG, \mZ, \mX, \vfg, \vgg), (\mGb, \mZb, \mXb, \vfgb, \vggb)$ respectively, where $\mG \neq \mGb$. Further, let $t_1,\ldots,t_I$ be the observation times. Then, under Assumptions \ref{assump: Global Lipschitz} and \ref{assump: diagonal diffusion}:
    \begin{enumerate}
        \item if $t_{i+1}-t_{i}=\Delta$ for all $i\in 1, ..., I-1$, then $p^\Delta(\mX_{t_1},\ldots, \mX_{t_I})\neq \pb^\Delta(\mXb_{t_1},\ldots, \mXb_{t_I})$, where $p^\Delta$ is the density generated by the Euler discretized \cref{eq: CRhino formulation} for $\mZ_t$;
        \item if we have a fixed time range $[0,T]$, then the path probability $p(\mX_{t_1},\ldots, \mX_{t_I})\neq \pb(\mXb_{t_1},\ldots, \mXb_{t_I})$ under the limit of infinite data ($I\rightarrow \infty$).
    \end{enumerate}
    \label{thm: identifiability of latent SDE}
\end{restatable}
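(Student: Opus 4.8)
The plan is to prove both parts by contraposition: I assume the observed laws coincide, strip off the additive observation noise to reduce to a statement purely about the latent process, and then invoke structural identifiability already established for latent dynamics --- the Euler-SEM analysis of \cref{sec: Preliminary} for part~1, and \cref{thm: structure identifiability observation} for part~2. Throughout I fix, since $\mG\neq\mGb$, an ordered pair $(i,j)$ that is an edge of exactly one of the two signature graphs, say $i\to j\in\mG$ but $i\to j\notin\mGb$; by the definition of the signature graph this means $f_{\bar G,j}$ and $g_{\bar G,j}$ are each independent of coordinate $i$, whereas at least one of $f_{G,j},g_{G,j}$ genuinely varies with coordinate $i$. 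I will also use the standing facts that the additive observation noise $\vep_t$ has the same dimension-wise independent law in both models, with an everywhere non-vanishing characteristic function (e.g.\ Gaussian), so that it can be deconvolved.

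\textbf{Part 1 (regular spacing).} Here $\mX_{t_1},\dots,\mX_{t_I}$ equals $\mZ^\Delta_{t_1}+\vep_{t_1},\dots,\mZ^\Delta_{t_I}+\vep_{t_I}$, where $\mZ^\Delta$ is the Euler--Maruyama chain of \cref{eq: EM updates no graph} applied to \cref{eq: CRhino formulation} with step $\Delta$ (exactly one step per observation interval). Since the noise block $(\vep_{t_i})_i$ is independent of the chain and has non-vanishing characteristic function, equality of the laws of $(\mX_{t_i})_i$ and $(\mXb_{t_i})_i$ forces equality of the laws of the latent chains $(\mZ^\Delta_{t_i})_i$ and $(\mZb^\Delta_{t_i})_i$ (divide characteristic functions coordinatewise). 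Both chains are Markov with explicit one-step transition $p^\Delta(\mZ^\Delta_{t_{k+1}}\mid\mZ^\Delta_{t_k}=\vz)=\prod_d\mathcal{N}(\,\cdot\,;\,z_d+f_{G,d}(\vz)\Delta,\ g_{G,d}(\vz)^2\Delta)$ --- here \cref{assump: diagonal diffusion} is essential, since it makes the transition a product of one-dimensional Gaussians whose mean and variance read off $\vfg,\vgg$ directly. Because $\vgg$ is non-vanishing, after one Euler step the latent marginal has a strictly positive density on all of $\R^D$ (a mixture of full-support Gaussians); hence, as soon as some observation time strictly before the last one has such a full-support marginal (automatic once $I\ge 3$, and otherwise ensured by a mild full-support condition on $\mZ_0$), matching the transition kernels there gives $f_{G,d}\equiv f_{\bar G,d}$ and $g_{G,d}^2\equiv g_{\bar G,d}^2$ on all of $\R^D$ for every $d$, by continuity of the coefficients (\cref{assump: Global Lipschitz}). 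Then $f_{G,j}$ and $|g_{G,j}|$ are both independent of coordinate $i$, contradicting $i\to j\in\mG$.

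\textbf{Part 2 (fixed $[0,T]$, $I\to\infty$).} Suppose for contradiction that along a fixed sequence of times $\{s_k\}_{k\ge1}$ dense in $[0,T]$ the observed laws agree for every $I$. Then the laws of the whole sequences $(\mX_{s_k})_{k\ge1}$ and $(\mXb_{s_k})_{k\ge1}$ agree, since finite-dimensional marginals determine the law on the product space. Now $\mZ$ has almost surely continuous paths (its coefficients are Lipschitz by \cref{assump: Global Lipschitz}), and the $\vep_{s_k}$ are i.i.d.; therefore, for each $t\in[0,T]$, averaging the observations over those $s_k$ lying in a window around $t$ whose width shrinks to $0$ while the number of enclosed $s_k$ grows to $\infty$ recovers $\mZ_t$ almost surely --- the strong law of large numbers annihilates the noise and the modulus of continuity of $\mZ$ controls the bias --- and the same recovers $\mZ_0$. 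This exhibits a measurable map from observation space to $C([0,T],\R^D)$ whose pushforward of the observed law is the law of $(\mZ_0,\mZ_{[0,T]})$; hence the latter coincides for the two models. Conditioning on the common initial state $\mZ_0=\vz_0$ (for $\vz_0$ in the support of the recovered initial law) then yields two solutions of \cref{eq: Observational process}-type SDEs with $\mG\neq\mGb$ and the same initial condition but identical law, contradicting \cref{thm: structure identifiability observation}. Hence the observed laws cannot agree in the limit.

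\textbf{Main obstacle.} Part~1 is essentially bookkeeping once the deconvolution is granted; the real work is in Part~2, namely making the ``reconstruct the latent path from dense noisy samples'' step rigorous: choosing the averaging windows so window width $\to 0$ while sample count $\to\infty$, combining a uniform SLLN rate with the (random) modulus of continuity of $\mZ$, and checking measurability of the reconstruction map. A further point is the passage from the recovered law of $(\mZ_0,\mZ_{[0,T]})$ with possibly random $\mZ_0$ to the fixed-initial-condition hypothesis of \cref{thm: structure identifiability observation}; and, underlying both parts, the non-vanishing characteristic function / common law of the observation noise used in the deconvolution step.
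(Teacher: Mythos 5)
Your proposal is correct in substance, and it splits into two cases relative to the paper. For Part 1 you take essentially the paper's route: deconvolve the i.i.d. observation noise via (non-vanishing) characteristic functions to get equality of the latent Euler chains, then read off drift and squared diffusion from the product-of-Gaussians one-step kernel (this is exactly the content of the paper's Lemma \ref{lem: Identifiability of Euler SEM}) and contradict $\mG\neq\mGb$ via the signature-graph definition; if anything you are more careful than the paper, which implicitly assumes the transition kernels agree on all of $\R^D$, whereas you justify this through full support of the marginal after one Euler step (and flag the $I=2$ edge case). For Part 2 your route is genuinely different: the paper again deconvolves pairwise joints, lets the gap $\Delta\to 0$ to identify the latent transition densities, and then climbs the chain transition density $\Rightarrow$ Feller semigroup $\Rightarrow$ generator $\Rightarrow$ graph (Lemmas \ref{lem: generator determine semigroup}, \ref{lem: generator and Euler SEM}, \ref{lem: Identifiability of Euler SEM}), staying entirely at the level of distributions; you instead reconstruct the latent path law almost surely from dense noisy samples by shrinking-window averaging (SLLN kills the noise, path continuity controls the bias), then condition on $\mZ_0$ and invoke \cref{thm: structure identifiability observation} as a black box. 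Your modularization is arguably cleaner—it reuses the already-proved observational-process theorem and avoids the paper's somewhat delicate ``transition densities agree as $\Delta\to 0$ implies equal semigroups'' step—but it buys this at the cost of extra (mild, and honestly flagged) requirements the paper's Fourier-only argument does not need: the noise must be integrable with zero (or known) mean and i.i.d. across observation times for the SLLN, the windowed-reconstruction and measurability details must be carried out, and the interface with \cref{thm: structure identifiability observation} requires the regular-conditional-probability step identifying the conditional law given $\mZ_0=\vz_0$ with the SDE started at $\vz_0$; none of these is a fundamental obstacle, so I would count your Part 2 as a valid alternative proof sketch rather than a gap.
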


\subsection{Consistency}
\label{subsec: Consistency}
Building upon the structural identifiability, we can prove the consistency of the variational formulation. Namely, in the infinite data limit, one can recover the ground truth graph and mechanism by maximizing ELBO with a sufficiently expressive posterior process and a correctly specified model. 
\begin{restatable}[Consistency of variational formulation]{theorem}{Consistency}
Suppose Assumptions \ref{assump: Global Lipschitz}-\ref{assump: expressive posterior} are satisfied for the latent formulation (\cref{eq: CRhino datamodel}). Then, for a fixed observation time range $[0,T]$, as the number of observations $I\rightarrow \infty$, when ELBO (\cref{eq: Overall ELBO}) is maximised, $q_\phi(\mG)=\delta(\mG^*)$, where $\mG^*$ is the ground truth graph, and the latent formulation recovers the underlying ground truth mechanism.  
\label{thm: Consistency}
\end{restatable}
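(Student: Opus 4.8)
\begin{sproof}
The plan is to peel apart the maximization of the ELBO in \cref{eq: Overall ELBO} in stages --- over the latent posterior process $q_\psi$, the graph posterior $q_\phi$, and the model parameters $\theta$ --- collapsing it to a maximum marginal-likelihood problem, and then to invoke \cref{thm: identifiability of latent SDE} in the infinite-data regime. First, for fixed $\theta$ and $\mG$, the gap in the inner bound \cref{eq: ELBO 2} equals the KL divergence between the variational posterior process $q_\psi(\mZt \mid \mX,\mG)$ of \cref{eq: posterior process} and the exact smoothing posterior of the latent diffusion \cref{eq: CRhino formulation} given the noisy observations. Since the exact posterior is itself an It\^o process with the \emph{same} diffusion coefficient $\vgg$ (a Doob $h$-transform / Girsanov argument), Assumption \ref{assump: expressive posterior} on the expressivity of $\vh_\psi,\vmu_\psi,\mSigma_\psi$ lets us drive this gap to zero, so $\sup_\psi$ of the right-hand side of \cref{eq: ELBO 2} equals $\log p_\theta(\mX \mid \mG)$. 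Substituting back, the ELBO becomes $\E_{q_\phi}[\log p_\theta(\mX\mid\mG)] - \KL(q_\phi(\mG)\Vert p(\mG)) = \log p_\theta(\mX) - \KL\!\big(q_\phi(\mG)\Vert p_\theta(\mG\mid\mX)\big)$, where $p_\theta(\mG\mid\mX)\propto p_\theta(\mX\mid\mG)p(\mG)$ is the exact Bayesian graph posterior; hence for every $\theta$ the optimal $q_\phi$ is $p_\theta(\mG\mid\mX)$ and $\sup_{\phi,\psi,\theta}\mathrm{ELBO}=\sup_\theta\log p_\theta(\mX)$.

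It then remains to analyze $\sup_\theta \log p_\theta(\mX)$ as $I\to\infty$ with $\{t_i\}$ dense in $[0,T]$. Because the measurement noise $\bm{\epsilon}_{t_i}$ is independent across the increasingly dense times while $t\mapsto\mZ_t$ is a.s.\ continuous, local averaging recovers the latent trajectory, whose quadratic variation in turn identifies the diagonal diffusion $\vgg$ along the path; correct specification then guarantees that the ground-truth $(\vfg,\vgg,\mG^*)$ attains the supremum. Finally, by the $I\to\infty$, estimator-level form of \cref{thm: identifiability of latent SDE}(2), any competitor with signature graph $\mG\neq\mG^*$ induces a strictly different observation path law and hence a strictly smaller asymptotic log-likelihood; this forces the Bayesian graph posterior obtained above to concentrate, $q_\phi(\mG)=\delta(\mG^*)$, and the recovered drift and diffusion to coincide with the ground-truth mechanism.

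I expect the second part to be the main obstacle: upgrading the \emph{distributional} non-identifiability of \cref{thm: identifiability of latent SDE}(2) into a \emph{consistency} statement from a \emph{single} trajectory needs (i) a rigorous de-noising limit reconstructing the latent path and its law; (ii) almost-sure convergence of the (normalized) marginal log-likelihood \emph{along} the trajectory --- a martingale/strong-law argument replacing the usual i.i.d.\ law of large numbers; and (iii) care that this not only matches observation laws but actually recovers the drift (the diffusion following cleanly from quadratic variation, the drift leaning on the identifiability argument and the structural constraint built into $\vfg$ via \cref{eq: drift design}). These are precisely the places where Assumptions \ref{assump: Global Lipschitz}--\ref{assump: expressive posterior} must do the heavy lifting.
\end{sproof}
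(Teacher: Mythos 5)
Your proposal follows essentially the same route as the paper's proof: tighten the inner bound via the expressive-posterior assumption and a Girsanov-type change of measure (the paper does this through the Bou\'e--Dupuis variational representation, whose unique optimizer is precisely the Doob $h$-transformed smoothing posterior with unchanged diffusion $\vgg$ that you describe), collapse the ELBO to the exact graph posterior and marginal likelihood, and then combine correct specification with Theorem~\ref{thm: identifiability of latent SDE}(2) in the $I\rightarrow\infty$ limit to force $q_\phi(\mG)=\delta(\mG^*)$ and recovery of the mechanism. The caveats you flag about the single-trajectory limit are genuine but correspond exactly to the step the paper also handles briefly (normalizing by $1/I$ so the KL term vanishes and invoking dominance of the ground-truth likelihood), so no different approach is required.
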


\section{Related work}
\label{sec: related work}
\paragraph{Discrete time causal models}
The majority of the existing approaches are inherently discrete in time. \cite{assaad2022survey} provides a comprehensive overview. There are three types of discovery methods: (1) Granger causality; (2) structure equation model (SEM); and (3) constraint-based methods. Granger causality assumes that no instantaneous effects are present and the causal direction cannot flow backward in time. \cite{wu2020discovering, shojaie2010discovering, siggiridou2015granger, amornbunchornvej2019variable} leverage the vector-autoregressive model to predict future observations. \cite{lowe2022amortized, tank2021neural, bussmann2021neural, dang2019seq2graph,xu2019scalable, khanna2019economy} utilise deep neural networks for prediction. Recently, \cite{cheng2023cuts} introduced a deep-learning based Granger causality that can handle irregularly sampled data, treating it as a missing data problem and proposing a joint framework for data imputation and graph fitting. SEM based approaches assume an explicit causal model associated to the temporal process. \cite{hyvarinen2010estimation} leverages the identifiability of additive noise models \citep{hoyer2008nonlinear} to build a linear auto-regressive SEM with non-Gaussian noise. \cite{pamfil2020dynotears} utilises the NOTEARS framework \citep{zheng2018dags} to continuously relax the DAG constraints for fully differentiable structure learning. The recently proposed \cite{gong2022rhino} extended the prior DECI \cite{geffner2022deep} framework to handle time series data and is capable of modelling instantaneous effect and history-dependent noise. Constraint-based approaches use conditional independence tests to determine the causal structures. \cite{runge2019detecting} combines the PC \citep{spirtes2000causation} and momentary conditional independence tests for the lagged parents. PCMCI+ \citep{runge2020discovering} can additionally detect the instantaneous effect. LPCMCI \citep{reiser2022causal} can further handle latent confounders. CD-NOD \citep{zhang2017causal} is designed to handle non-stationary heterogeneous time series data. However, all constraint-based approaches can only identify the graph up to Markov equivalence class without the functional relationship between variables. 

\paragraph{Continuous time causal models}
In terms of using differential equations to model the continuous temporal process, \cite{hansen2014causal} proposed using stochastic differential equations to describe the temporal causal system. They proved identifiability with respect to the intervention distributions, but did not show how to learn a corresponding SDE. Penalised regression has been explored for linear models, where parameter consistency has been established \citep{ramsay2007parameter, chen2017network, wu2014sparse}. Recently, NGM \citep{bellot2021neural} uses ODEs to model the temporal process with both identifiability and consistency results. As discussed in previous sections, \ModelName{} is based on SDEs rather than ODEs, and can model the intrinsic stochasticity within the causal system, whereas NGM assumes deterministic state transitions. 
\section{Experiments}
\label{sec: experiments}

\paragraph{Baselines and Metrics} We benchmark our method against a representative sample of baselines: (i) VARLiNGaM \citep{hyvarinen2010estimation}, a linear SEM based approach; (ii) PCMCI+ \citep{runge2018causal, runge2020discovering}, a constraint-based method for time series; (iii) CUTS, a Granger causality approach which can handle irregular time series;
(iv) Rhino \citep{gong2022rhino}, a non-linear SEM based approach with history-dependent noise and instantaneous effects; and (v) NGM \citep{bellot2021neural}, a continuous-time ODE based structure learner.
Since most methods require a threshold to determine the graph, we use the threshold-free \emph{area under the ROC curve} (AUROC) as the performance metric. In appendix \ref{app: Experiments}, we also report F1 score, true positive rate (TPR) and false discovery rate (FDR).

\paragraph{Setup} Both the synthetic datasets (Lorenz-96, Glycolysis) and real-world datasets (DREAM3, Netsim) consist of multiple time series. However, it is not trivial to modify NGM and CUTS to support multiple time series. For fair comparison, we use the concatenation of multiple time series, which we found empirically to improve performance. We also mimic irregularly sampled data by randomly dropping observations, which VARLiNGaM, PCMCI, and Rhino cannot handle; in these cases, for these methods we impute the missing data using zero-order hold (ZOH). 
\addition{For \ModelName{}, we use pathwise gradient estimators with Euler discretization for solving the SDE (see \cref{subapp: choice sde solver} for discussion on this choice).} 
Further experimental details can be found in Appendices \ref{app: model architecture}, \ref{app: baselines}, \ref{app: Experiments}.
\subsection{Synthetic experiments: Lorenz and Glycolysis}
\label{subsec: synthetic experiments}
First, we evaluate \ModelName{} on synthetic benchmarks including the Lorenz-96 \citep{lorenz1996predictability} and Glycolysis \citep{daniels2015efficient} datasets, which model continuous-time dynamical systems. The Lorenz model is a well-known example of chaotic systems observed in biology \citep{goldberger1987applications,heltberg2019chaotic}. To mimic irregularly sampled data, we follow the setup of \cite{cheng2023cuts} and randomly drop some observations with missing probability $p$. 
We also simulate another dataset from a biological model, which describes metabolic iterations that break down glucose in cells. This is called \emph{Glycolysis}, consisting of an SDE with $7$ variables. As a preprocessing step, we standardised this dataset to avoid large differences in variable scales.
Both datasets consist of $\maxsmpidx = 10$ time series with sequence length $\maxtimeidx = 100$ (before random drops), and have dimensionality $10$ and $7$, respectively. Note that we choose a large data sampling interval, as we want to test settings where observations are fairly sparse and the difficulty of correctly modelling continuous-time dynamics increases. The above data setup is different from \cite{bellot2021neural, cheng2023cuts} where they use a single series with $I = 1000$ observations, which is more informative compared to our sparse setting.
Refer to \cref{subapp: synthetic lorenz} and \cref{subapp: synthetic glycolysis} for details. 

The left two columns in \cref{tab: synthetic results} compare the AUROC of \ModelName{} to baselines for Lorenz. We can see that \ModelName{} can effectively handle the irregularly sampled data compared to other baselines. Compared to NGM and CUTS, we can achieve much better results with small missingness and performs competitively with larger missingness.
Rhino, VARLiNGaM and PCMCI+ perform poorly in comparison as they assume regularly sampled observations and are discrete in nature.

From the right column in \cref{tab: synthetic results}, \ModelName{} outperforms the baselines by a large margin on Glycolysis. In particular, compared to the ODE-based NGM, \ModelName{} clearly demonstrates the advantage of the proposed SDE framework in multiple time series settings. As we may have anticipated from the discussion in \cref{subsec: comparison to Bellot}, NGM can produce an incorrect model when multiple time series are sampled from a given SDE system. Another interesting observation is that \ModelName{} is more robust when encountering data with different scales compared to NGM (refer to \cref{subsubapp: glycolysis additional}). This robustness is due to the stochastic nature of SDE compared to the deterministic ODE, where ODE can easily overshoot with less stable training behaviour. We can also see that \ModelName{} has a significant advantage over both CUTS and Rhino, which do not model continuous-time dynamics.
\begin{table}[!h]
\centering
\scalebox{0.95}{
\begin{tabular}{l|ll|l}
\hline
              & \multicolumn{2}{c|}{\textbf{Lorenz-96}}                            & \multicolumn{1}{c}{\textbf{Glycolysis}} \\ 
              & \multicolumn{1}{c}{$p=0.3$} & \multicolumn{1}{c|}{$p=0.6$} & \multicolumn{1}{c}{Full}       \\ \hline
VARLiNGaM &0.5102$\pm$0.025 &0.4876$\pm$0.032 & 0.5082$\pm$0.009 \\
PCMCI+        &   0.4990$\pm$0.008                        &        0.4952$\pm$0.021                    &           0.4607$\pm$0.031                     \\
NGM           &  0.6788$\pm$0.009                       &     0.6329$\pm$0.008                   &     0.5953$\pm$0.018                          \\
CUTS          & 0.6042$\pm$0.015   
&  0.6418$\pm$0.012
&              0.580$\pm$0.007                  \\
Rhino         &      0.5714$\pm$0.026                     &           0.5123$\pm$0.025                 &        0.520$\pm$0.015\\
\ModelName{} (ours) &  \textbf{0.7279$\pm$0.017}                         &   \textbf{0.6453$\pm$0.014}                         &      \textbf{0.7113$\pm$0.012 }                         \\ \hline
\end{tabular}
}
\caption{AUROC of synthetic datasets from \ModelName{} and baselines. $p$ represents missing probability, and \emph{Full} means complete data without missingness. Each number is the average over $5$ runs. }
\label{tab: synthetic results}
\end{table}
\subsection{Dream3}
\label{subsec: Dream3}
We also evaluate \ModelName{} performance on the DREAM3 datasets \citep{prill2010towards,marbach2009generating}, which have been adopted for assessing the performance of structure learning \citep{tank2021neural, pamfil2020dynotears, gong2022rhino}. These datasets contain \emph{in silico} measurement of gene expression levels for $5$ different structures. Each dataset corresponds to a particular gene expression network, and contains $\maxsmpidx = 46$ time series of 100 dimensional variables, with $\maxtimeidx=21$ per series. The goal is to infer the underlying structures from each dataset. Following the same setup as \citep{gong2022rhino,khanna2019economy}, we ignore all the self-connections by setting the edge probability to $0$, and use AUROC as the performance metric. 
\Cref{subapp: dream3} details the experiment setup, selected hyperparameters, and additional plots. We do not include VARLiNGaM since it cannot support \cameraready{time} series where the dimensionality ($100$) is greater than the length ($21$). Also, due to the time series length, we decide not to test with irregularly sampled data. We use the reported numbers for Rhino and PCMCI+ in \citet{gong2022rhino} as the experimental setup is identical. For CUTS, we failed to reproduce the reported number in their paper, but we cite it for a fair comparison. 

\Cref{tab: Dream3 results} shows the AUROC performances of \ModelName{} and baselines. 
We can clearly observe that \ModelName{} outperforms the other baselines with a large margin. This indicates the advantage of the SDE formulation compared to ODEs and discretized temporal models, even when we have complete and regularly sampled data. A more interesting observation is to compare Rhino with \ModelName{}. As discussed before, as \ModelName{} is the continuous version of Rhino, the advantage comes from the continuous formulation and the corresponding training objective \cref{eq: Overall ELBO}. 

\begin{table}[]
\centering
\resizebox{\columnwidth}{!}{%
\begin{tabular}{@{}lllllll@{}}
\toprule
       & \textbf{EColi1} & \textbf{Ecoli2} &\textbf{Yeast1}&\textbf{Yeast2}&\textbf{Yeast3} & Mean \\ \midrule
PCMCI+ & 0.530$\pm$0.002 & 0.519$\pm$0.002 & 0.530 $\pm$0.003 & 0.510$\pm$0.001&  0.512 $\pm$ 0& 0.520$\pm$0.004 \\
NGM & 0.611$\pm$0.002 & 0.595$\pm$0.005 & 0.597$\pm$0.006 & 0.563$\pm$0.006 & 0.535$\pm$0.004 & 0.580$\pm$0.007
\\  
CUTS & 0.543$\pm$0.003 & 0.555$\pm$0.005 & 0.545$\pm$0.003 & 0.518$\pm$0.007 & 0.511$\pm$0.002 & 0.534$\pm$0.008 (0.591) \\
Rhino & 0.685$\pm$0.003 & 0.680$\pm$0.007 & 0.664$\pm$0.006 &0.585$\pm$0.004 &0.567$\pm$0.003 & 0.636$\pm$0.022\\

\ModelName{} (ours) & \textbf{0.752$\pm$0.008}          &   \textbf{0.705$\pm$0.003}  & \textbf{0.712$\pm$0.003}  & \textbf{0.622 $\pm$ 0.004} &  \textbf{0.594$\pm$ 0.001} &\textbf{0.677$\pm$ 0.026}   \\ \bottomrule
\end{tabular}
}
\caption{AUROC for \ModelName{} on DREAM3 100-dimensional datasets. Results are obtained by averaging over 5 runs. We cite the reported CUTS performance in parentheses.
}
\label{tab: Dream3 results}
\end{table}

\subsection{Netsim}
\label{subsec: Netsim}
Netsim consists of \emph{blood oxygenation level dependent} imaging data. Following the same setup as \cite{gong2022rhino}, we use subjects 2-6 to form the dataset, which consists of 5 time series. Each contains $15$ dimensional observations with $I=200$. The goal is to infer the underlying connectivity between different brain regions. Unlike Dream3, we include the self-connection edge for all methods. To evaluate the performance under irregularly sampled data, we follow the same setup as in the Lorenz and \citet{cheng2023cuts} to randomly drop observations with missing probability. \highlight{Since it is very important to model instantaneous effects in Netsim \citep{gong2022rhino}, which \ModelName{} cannot handle, we replace Rhino with Rhino+NoInst and PCMCI+ with PCMCI for fair comparison.} 

\Cref{tab: Netsim results} shows the performance comparisons. We  observe that \ModelName{} significantly outperforms the other baselines and performs on par with Rhino+NoInst, which demonstrates its robustness towards smaller datasets and balance between true and false positive rates. Again, this confirms the modelling power of our approach compared to NGM and other baselines. Interestingly, Rhino-based approaches perform particularly well on the Netsim dataset.
We suspect that the underlying generation mechanism can be better modelled with a discretized as opposed to continuous system.

\begin{table}[]
\centering
\begin{tabular}{@{}llll@{}}
\toprule
         & Full & $p=0.1$ &$p=0.2$      \\ \midrule
VARLiNGaM & 0.84$\pm$0 &0.723$\pm$0.001 &0.719$\pm$0.003 \\
PCMCI & 0.83$\pm$0 &0.81$\pm$0.001 &0.79$\pm$0.006\\
NGM & 0.89 $\pm$ 0.009 & 0.86 $\pm$ 0.009 & 0.85 $\pm$0.007 \\
CUTS & 0.89 $\pm$ 0.010 & 0.87 $\pm$ 0.008 & 0.87 $\pm$0.011 \\
Rhino+NoInst & \textbf{0.95 $\pm$0.001}  & \textbf{0.93$\pm$ 0.005} & \textbf{0.90$\pm$0.012}\\
\ModelName{} (ours) & \textbf{0.95$\pm$ 0.006}  & 0.91$\pm$0.007 & 0.89$\pm$0.007              \\ \bottomrule
Rhino& \textbf{0.99$\pm$0.001} & \textbf{0.98$\pm$0.004}& \textbf{0.97$\pm$0.003}\\ \bottomrule
\end{tabular}
\caption{AUROC on Netsim dataset (subjects 2-6). Results are obtained by averaging over 5 runs.}

\label{tab: Netsim results}
\end{table}
\section{Conclusion}
\label{sec:conclusion}
We propose \ModelName{}, a flexible continuous-time temporal structure learning method based on latent It\^o diffusion. We leverage the variational inference framework to infer the posterior over latent states and the graph. Theoretically, we validate our approach by proving the structural identifiability of the It\^o diffusion and latent formulation, and the consistency of the proposed variational framework. Empirically, we extensively evaluated our approach using synthetic and semi-synthetic datasets, where \ModelName{} outperforms the baselines in both regularly and irregularly sampled data. There are three limitations that require further investigation. The first one is the inability to handle instantaneous effects, which can arise due to data aggregation. Another computational drawback is it scales linearly with the series length. This could be potentially fixed by incorporating an encoder network to infer latent states at arbitrary time points. \addition{Last but not least, the current formulation of \ModelName{} cannot handle non-stationary systems due to the homogeneous drift and diffusion function. However, direct incorporation of time embeddings may break the theoretical guarantees without additional assumptions. Therefore, new theories and methodologies may be needed to tackle such a scenario.} We leave these challenges for future work.

\clearpage
\section*{Acknowledgements}

We thank the members of the Causica team at Microsoft Research for helpful discussions. We thank Colleen Tyler, Maria Defante, and Lisa Parks for conversations on real-world use cases that inspired this work. This work was done in part while Benjie Wang was visiting the Simons Institute for the Theory of Computing.

\bibliography{iclr2024_conference}
\bibliographystyle{iclr2024_conference}

\clearpage
\appendix
\section{Identifiability of stochastic differential equations}
\label{app: theory}

\subsection{Definitions and assumptions}
\label{subapp: definitions}
In this part, we will introduce some basic definitions
and assumptions 
required for the theory. 

First, let us restate assumption \ref{assump: Global Lipschitz}.
\AssumpOne*
This assumption regularizes the It\^o diffusion to have a unique strong solution $\mX_t$ to \cref{eq: ito diffusion no graph}, which is a standard assumption in the SDE literature. In addition, this diffusion satisfies the Feller continuous property, and its solution is a Feller process (Lemma 8.1.4 in \cite{oksendal2003stochastic}). 

\begin{definition}[Feller process and semi-group]
    A continuous time-homogeneous Markov family $\mX_t$ is a Feller process when, for all $\vx \in \R^D$, we have 
    $\forall t, \vy\rightarrow \vx \Rightarrow \mX_{y,t}\rightarrowd \mX_{x,t}$ and $t\rightarrow 0 \Rightarrow \mX_{x,t} \rightarrowp \vx$ where $\rightarrowd$, $\rightarrowp$ means convergence in distribution and in probability, respectively, and $\mX_{y,t}$ means the solution with $y$ as the initial condition.  A semigroup of linear, positive, conservative contraction operators $\transit_t$ is a Feller semigroup if, for every $\vf \in C_0, \vx \in \R^D$, we have $\transit_t\vf \in C_0$ and $\lim_{t\rightarrow 0}\transit_t\vf(\vx) = \vf(\vx)$, where $C_0$ is the space of continuous functions vanishing at infinity.  
    \label{def: feller process}
\end{definition}
Basically, the transition operator of a Feller process is a Feller semigroup. The reason we care about the Feller process is its nice properties related to its infinitesimal generators. In a nutshell, the distributional properties of the Feller process can be uniquely characterised by its generators. 

\begin{definition}[Infinitesimal generator]
    For a Feller process $\mX_t$ with a Feller semigroup $\transit_t$, we define the generator $\generator$ by 
    \begin{equation}
        \generator f = \lim_{t\downarrow 0}\frac{\transit_t f - f}{t}\;\;\;\; \text{for any } f \in D(A)
        \label{eq: generator}
    \end{equation}
    where $D(A)$ is the domain of the generator, defined as the function space where the above limit exists. 
    \label{def: generator}
\end{definition}

Next, let us restate assumption \ref{assump: diagonal diffusion}.
\DiagDiff*
This is a key assumption for structure identifiability. For a general matrix diffusion function, it is easy to come up with unidentifiable examples (see Example 5.5 in \citep{hansen2014causal}). For example, in a driftless process, the distribution of $\mX_t$ will depend on $\vgg\vgg^T$, where it can have multiple factorizations that correspond to different graphs.

\subsection{Structure identifiability for observational process}
\label{subapp: structure identifiability observational process}
Now, let us re-state \cref{thm: structure identifiability observation}:
\IdenObser*

To prove this theorem, we begin by establishing the corresponding result for the discretized Euler SEMs, and then build the connection to the It\^o diffusion through the infinitesimal generator.

\begin{lemma}[Identifiability of Euler SEM]
    Assuming assumption \ref{assump: diagonal diffusion} is satisfied with nonzero diagonal diffusion functions. For a Euler SEM defined as 
    \begin{equation}
        \mX^\Delta_{t+1} = \mX^\Delta_{t}+\vfg(\mX^\Delta_t)\Delta+\vgg(\mX^\Delta_t)\bm{\eta}_t,\;\;\;\; \bm{\eta}_t\sim \mathcal{N}(0, \Delta\mI),
    \end{equation}
    if we have another Euler SEM defined as 
    \begin{equation}
        \bar{\mX}^\Delta_{t+1} = \mXb^\Delta_{t}+\vfgb(\mXb^\Delta_t)\Delta+\vggb(\mXb^\Delta_t)\bm{\bar{\eta}}_t,\;\;\;\; \bm{\bar{\eta}}_t\sim \mathcal{N}(0, \Delta\mI).
    \end{equation}
Then their corresponding transition density $p(\mX^\Delta_{t+1}\vert \mX^\Delta_t=\va) = \pb(\mXb^\Delta_{t+1}\vert \mXb^\Delta_t=\va)$ for all $\va\in\R^D$ iff. $\mG=\bar{\mG}$, $\vfg=\bar{\vf_G}$ and $\vert\vgg\vert=\vert\bar{\vg}_G\vert$. 
\label{lem: Identifiability of Euler SEM}
\end{lemma}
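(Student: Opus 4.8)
The plan is to exploit the fact that, conditionally on $\mXdt=\va$, the Euler update $\mXdtp=\va+\vfg(\va)\Delta+\vgg(\va)\bm{\eta}_t$ is an \emph{affine} function of the Gaussian increment $\bm{\eta}_t\sim\mathcal{N}(0,\Delta\mI)$, so its conditional law is Gaussian and is therefore determined entirely by its mean and covariance. Explicitly, $p(\mXdtp\mid\mXdt=\va)=\mathcal{N}\big(\va+\vfg(\va)\Delta,\ \Delta\,\vgg(\va)\vgg(\va)^\top\big)$, and by Assumption~\ref{assump: diagonal diffusion} the covariance equals $\Delta\,\mathrm{diag}\big(g_{G,1}(\va)^2,\dots,g_{G,D}(\va)^2\big)$ with strictly positive diagonal, so this Gaussian is nondegenerate; the same holds for the barred process.

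For the ``$\Leftarrow$'' direction: if $\vfg=\vfgb$ and $\vert\vgg\vert=\vert\vggb\vert$ componentwise, then $g_{G,d}(\va)^2=\bar g_{G,d}(\va)^2$ for all $d$ and $\va$, so the two conditional Gaussians share the same mean $\va+\vfg(\va)\Delta$ and the same diagonal covariance, hence the same density at every $\va$; note that the graph hypothesis $\mG=\mGb$ is not even needed here.

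For the ``$\Rightarrow$'' direction: equality of two nondegenerate Gaussians forces equality of their means and covariances. Matching means gives $\Delta\vfg(\va)=\Delta\vfgb(\va)$ for all $\va$, hence $\vfg=\vfgb$ since $\Delta>0$; matching the diagonal covariances gives $g_{G,d}(\va)^2=\bar g_{G,d}(\va)^2$ for all $d,\va$, i.e. $\vert\vgg\vert=\vert\vggb\vert$. It then remains to recover $\mG=\mGb$: since $\vfg=\vfgb$, their partial derivatives coincide; and since each $g_{G,d},\bar g_{G,d}$ is continuous (Assumption~\ref{assump: Global Lipschitz}) and nowhere zero on the connected space $\R^D$, the ratio $g_{G,d}/\bar g_{G,d}$ is a continuous $\{-1,+1\}$-valued map, hence a constant $\epsilon_d$, so $g_{G,d}=\epsilon_d\bar g_{G,d}$ and $\partial_i g_{G,d}$, $\partial_i\bar g_{G,d}$ have identical zero sets. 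Because the signature graph is defined precisely by the nonvanishing of these partials, $\mG$ and $\mGb$ have the same edge set.

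\textbf{Main obstacle.} Everything except the last step is routine Gaussian bookkeeping; the one point requiring care is upgrading $\vert\vgg\vert=\vert\vggb\vert$ to equality of signature graphs, i.e. ruling out that the sign ambiguity $g_{G,d}=\pm\bar g_{G,d}$ produces a spurious mismatch. This is exactly where both assumptions are essential: diagonality (Assumption~\ref{assump: diagonal diffusion}) makes the covariance ``factorization'' $\Sigma=\vgg\vgg^\top$ coordinatewise and unique up to sign, and continuity together with nonvanishing of $\vgg$ (from Assumption~\ref{assump: Global Lipschitz}) pins that sign to a single value per coordinate, so the derivative supports --- and hence the graphs --- agree. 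If one prefers to avoid pointwise differentiability of $\vfg,\vgg$, the same conclusion follows by reading the graph off functional dependence rather than derivatives, since $\vfg=\vfgb$ and $g_{G,d}=\epsilon_d\bar g_{G,d}$ induce the same dependence pattern.
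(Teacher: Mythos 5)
Your proposal is correct and follows essentially the same route as the paper's proof: identify the one-step conditional law as a nondegenerate Gaussian $\mathcal{N}(\va+\vfg(\va)\Delta,\ \vgg^2(\va)\Delta)$, match means and covariances to get $\vfg=\vfgb$ and $\vert\vgg\vert=\vert\vggb\vert$, and then read off $\mG=\mGb$ from the derivative-based definition of the signature graph. The only difference is that you resolve the diffusion sign ambiguity explicitly (continuity plus nonvanishing on connected $\R^D$ forces $g_{G,d}=\pm\bar g_{G,d}$ globally), a point the paper dispatches with a brief "similar analysis" for the diffusion case; your version is a slightly more careful rendering of the same argument.
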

\begin{proof}
    If we have $\mG=\bar{\mG}$, $\vf=\bar{\vf}$ and $\vert\vg\vert=\vert\bar{\vg}\vert$, then it is trivial that their transition densities are the same since they define the same Euler SEM update equations (up to the sign of the diffusion term) with given initial conditions. 

    On the other hand, we know 
    \begin{align*}
        p(\mX^\Delta_{t+1}\vert \mX^\Delta_t=\va) &= \mathcal{N}(\vfg(\va)\Delta+\va, \vgg^2(\va)\Delta)\\
        \pb(\mXb^\Delta_{t+1}\vert \mXb^\Delta_t=\va) &= \mathcal{N}(\vfgb(\va)\Delta+\va, \vggb^2(\va)\Delta)
    \end{align*}
    Thus, if two conditional distributions match, we have 
    \begin{equation}
        \vfg(\va)\Delta = \vfgb(\va)\Delta\;\;\;\; \vgg^2(\va)\Delta = \vggb^2(\va)\Delta
    \end{equation}
    Since $\Delta>0$, we have $\vfg(\va)=\vfgb(\va)$, $\vgg^2(\va)=\vggb^2(\va)$ for all $\va\in \R^D$. 
    From the diagonal diffusion assumption, we know $\vert\vgg(\va)\vert=\vert\vggb(\va)\vert$.

    Now, assume for contradiction that $\mG\neq \mGb$; then there exists $X_{t,i}^\Delta\rightarrow X_{t+1,j}^\Delta$ in $\mG$ but not in $\mGb$. Then we have by definition that $\frac{\partial \bar{f}_j(\mX^\Delta_t, \mGb)}{\partial X^\Delta_{t,i}}= 0$ and $\frac{\partial \bar{g}_j(\mX^\Delta_t, \mGb)}{\partial X^\Delta_{t,i}}= 0$ for all $\mX^\Delta_t$, and also $\frac{\partial f_j(\mX^\Delta_t, \mG)}{\partial X^\Delta_{t,i}}\neq 0$ or $\frac{\partial g_j(\mX^\Delta_t, \mG)}{\partial X^\Delta_{t,i}}\neq 0$ for some $\mX^\Delta_t$.
    In the former case, if $\frac{\partial f_j(\mX^\Delta_t, \mG)}{\partial X^\Delta_{t,i}}\neq 0$ but $\frac{\partial \bar{f}_j(\mX^\Delta_t, \mGb)}{\partial X^\Delta_{t,i}}= 0$ for some $\mX^\Delta_t$, we have a contradiction to $\vfg(\va)=\vfgb(\va)$ for $\va \in \R^D$. A similar analysis can be done in the latter case for $\vgg$, $\vggb$. Thus, we have $\mG=\mGb$, $\vfg = \bar{\vf_G}$ and $\vert\vgg\vert = \vert\bar{\vg}_G\vert$. 
\end{proof}

Next, we will prove a lemma that builds a bridge between the generator of the It\^o diffusion and its corresponding Euler SEM. 
\begin{lemma}[Generator characterises Euler SEM]
    Assume that assumptions \ref{assump: Global Lipschitz} and \ref{assump: diagonal diffusion}. For an It\^o diffusion defined as \cref{eq: Observational process}, we denote its corresponding variables in Euler SEM with $\Delta$ discretization as $\mXd$. Similarly, if we have an alternative It\^o diffusion defined with $\vfgb$, $\vggb$ and $\mGb$, ae define the corresponding Euler SEM variables $\mXdb$. Then, the generators of the It\^o diffusions $\generator = \bar{\generator}$ iff.~their Euler SEM variables have the same distribution with given initial conditions. 
    \label{lem: generator and Euler SEM}
\end{lemma}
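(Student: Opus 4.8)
The strategy is to express both conditions in terms of the coefficient functions and reduce the biconditional to a single pointwise statement: I claim that each of ``$\generator=\bar{\generator}$'' and ``$\mXd,\mXdb$ have the same law from a common initial condition'' is equivalent to $\vfg=\vfgb$ together with $\vert\vgg\vert=\vert\vggb\vert$ (equivalently $g_{G,d}^2=\bar g_{G,d}^2$ for every $d$); under Assumption~\ref{assump: diagonal diffusion} this in turn forces $\mG=\mGb$, keeping everything consistent with Lemma~\ref{lem: Identifiability of Euler SEM}. The main ingredient is the explicit form of the generator: under Assumption~\ref{assump: Global Lipschitz} the coefficients have at most linear growth, \cref{eq: Observational process} is a Feller process, and its generator $\generator$ contains $C_c^\infty(\R^D)$ in its domain, on which (standard Dynkin-type results, \citep{oksendal2003stochastic}) it acts as
\begin{equation*}
\generator\varphi(\vx)=\sum_{d=1}^D f_{G,d}(\vx)\,\partial_d\varphi(\vx)+\tfrac12\sum_{d=1}^D g_{G,d}(\vx)^2\,\partial_d^2\varphi(\vx),
\end{equation*}
the mixed second derivatives dropping out precisely because Assumption~\ref{assump: diagonal diffusion} makes the diffusion matrix $\mathrm{diag}(g_{G,1}^2,\dots,g_{G,D}^2)$ diagonal; the analogous formula holds for $\bar{\generator}$. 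Since $C_c^\infty$ is a core, $\generator=\bar{\generator}$ iff the two operators agree on $C_c^\infty$.

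For the direction ``$\generator=\bar{\generator}\Rightarrow$ equal laws'', I would recover the coefficients pointwise by testing against bump functions. Fixing $\vx_0$ and a coordinate $d$, pick $\varphi\in C_c^\infty$ equal to $x_d-x_{0,d}$ on a neighbourhood of $\vx_0$, so that $\generator\varphi(\vx_0)=f_{G,d}(\vx_0)$ and hence $f_{G,d}(\vx_0)=\bar f_{G,d}(\vx_0)$; and pick $\psi\in C_c^\infty$ equal to $(x_d-x_{0,d})^2$ near $\vx_0$, so that $\generator\psi(\vx_0)=g_{G,d}(\vx_0)^2$ and hence $g_{G,d}(\vx_0)^2=\bar g_{G,d}(\vx_0)^2$. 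As $\vx_0$ is arbitrary this gives $\vfg=\vfgb$ and $\vert\vgg\vert=\vert\vggb\vert$ as functions. The one-step transition kernel of $\mXdtp=\mXdt+\vfg(\mXdt)\Delta+\vgg(\mXdt)\bm{\eta}_t$, $\bm{\eta}_t\sim\mathcal{N}(0,\Delta\mI)$, equals $\mathcal{N}\!\big(\cdot+\vfg(\cdot)\Delta,\;\Delta\,\mathrm{diag}(g_{G,d}^2(\cdot))\big)$, which depends on $\vgg$ only through $g_{G,d}^2$; therefore the kernels of $\mXd$ and $\mXdb$ coincide, and by the Markov property the two chains have the same law whenever started from the same initial condition.

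Conversely, if $\mXd$ and $\mXdb$ have the same law from a common initial condition, then because the Gaussian transition kernel has full support the transition density $p(\mXdtp\mid\mXdt=\va)$ is determined for every $\va\in\R^D$, so Lemma~\ref{lem: Identifiability of Euler SEM} applies and yields $\vfg=\vfgb$, $\vert\vgg\vert=\vert\vggb\vert$ (indeed also $\mG=\mGb$); substituting these equal coefficients into the generator formula gives $\generator=\bar{\generator}$. For consistency with Lemma~\ref{lem: Identifiability of Euler SEM} I would also note that Assumption~\ref{assump: diagonal diffusion} keeps each $g_{G,d}$ nonvanishing, so $\partial_i g_{G,d}\neq0\iff\partial_i g_{G,d}^2\neq0$, which together with $\vfg=\vfgb$ shows the signature graphs defined by the two coefficient pairs agree. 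The main obstacle is the functional-analytic bookkeeping — justifying that $C_c^\infty$ is a core for $\generator$ so that ``$\generator=\bar{\generator}$'' may be checked on smooth compactly supported test functions, constructing the bump functions that coincide with the required low-degree polynomials on a neighbourhood of $\vx_0$, and the full-support argument needed to pass from equality of trajectory laws to equality of one-step kernels.
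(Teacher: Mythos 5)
Your proposal is correct and has the same skeleton as the paper's proof: write the generator explicitly as the second-order differential operator (with only diagonal second-order terms, by Assumption~\ref{assump: diagonal diffusion}), recover equality of drift and squared diffusion from equality of generators to get matching Euler transition kernels, and, for the converse, pass through Lemma~\ref{lem: Identifiability of Euler SEM} to get $\vfg=\vfgb$, $\vert\vgg\vert=\vert\vggb\vert$, $\mG=\mGb$ and then read off $\generator=\bar{\generator}$ from the generator formula. The one genuine divergence is in the forward direction: the paper cites Lemma A.3 of \citet{hansen2014causal} to conclude coefficient (and graph) equality from $\generator=\bar{\generator}$, whereas you derive it directly by testing the generator on bump functions that agree with $x_d-x_{0,d}$ and $(x_d-x_{0,d})^2$ near an arbitrary $\vx_0$, which correctly yields $f_{G,d}(\vx_0)=\bar f_{G,d}(\vx_0)$ and $g_{G,d}^2(\vx_0)=\bar g_{G,d}^2(\vx_0)$; this buys self-containedness (it is essentially how the cited lemma is proved) at the cost of a little functional-analytic care, while the citation buys brevity. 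Your added remark that nonvanishing $g_{G,d}$ gives $\partial_i g_{G,d}\neq 0\iff\partial_i g_{G,d}^2\neq 0$, so the signature graphs also coincide, is a detail the paper absorbs into the citation, and your full-support/continuity step for passing from equality of chain laws under a common initial condition to equality of the one-step kernel at every $\va\in\R^D$ is more careful than the paper, which implicitly identifies the two. One small simplification: the core property of $C_c^\infty$ is not needed — for the forward direction you only need that compactly supported $C^2$ functions lie in both domains with the stated action (Theorem 7.3.3 of \citet{oksendal2003stochastic}), and for the converse, equal coefficients give the same diffusion in law, hence the same Feller semigroup and the same generator.
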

\begin{proof}
    First, assume $\generator=\bar{\generator}$, then for any $h\in C_0^2$ (twice continuously differentiable functions vanishing at infinity), we can define the generator for It\^o diffusion as 
    \begin{equation}
        \generator h(\vx) = \sum_d f_d(\vx,\mG)\frac{\partial h(\vx)}{\partial x_d} +\frac{1}{2}\sum_d g^2_d(\vx, \mG)\frac{\partial^2 h(\vx)}{\partial x_d^2}
        \label{eq: ito generator equation}
    \end{equation}
    Similarly, we can define $\bar{\generator}$. From Lemma A.3 \citep{hansen2014causal}, we know if $\generator =\bar{\generator}$, then $\mG=\mGb$, $\vf(\cdot, \mG) = \bar{\vf}(\cdot, \mGb)$ and $\vg^2(\cdot, \mG)=\bar{\vg}^2(\cdot, \mGb)$ for $\vx \in \R^D$. 
    Therefore, by the definition of Euler SEM (\cref{eq: EM updates no graph}), it is trivial that they define the same transition density $p(\mX^\Delta_{t+1}\vert \mX^\Delta_t=\va) = \pb(\mXdb_{t+1}\vert \mXdb_t=\va)$ for $\va\in\R^D$.

    On the other hand, if the two Euler SEMs define the same transition densities, then from Lemma \ref{lem: Identifiability of Euler SEM}, we have $\vfg=\bar{\vf}_G$, $\vert \vgg\vert = \vert\bar{\vg}_G\vert$ and $\mG=\mGb$. Then from \cref{eq: ito generator equation}, $\generator = \bar{\generator}$. 
\end{proof}

Finally, the following lemma shows why we care about the infinitesimal generator for the Feller process.
\begin{lemma}[Generator uniquely determines Feller semigroup]
    Let us define the Feller semigroup transition operator $\transit_t$ and $\bar{\transit}_t$ associated with generator $\generator$, $\bar{\generator}$. Then, $\transit_t=\bar{\transit}_t$ iff.~$\generator=\bar{\generator}$.
    \label{lem: generator determine semigroup}
\end{lemma}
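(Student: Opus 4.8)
The plan is to prove the two implications separately, with essentially all the content in the direction $\generator = \bar\generator \Rightarrow \transit_t = \bar\transit_t$. The converse is immediate from the definition of the generator: if $\transit_t = \bar\transit_t$ for every $t\ge 0$, then the limit in \cref{eq: generator} computed with $\transit_t$ coincides with the one computed with $\bar\transit_t$, both on the same domain, so $\generator = \bar\generator$ as operators (domains included).

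For the forward direction I would invoke the Hille--Yosida machinery, which applies because a Feller semigroup is a strongly continuous contraction semigroup on $C_0$. Its generator $\generator$ is then densely defined, and for every $\lambda>0$ the resolvent $\resolvent_\lambda := (\lambda I-\generator)^{-1}$ exists as a bounded operator on $C_0$ and is given by the Laplace-transform formula
\[
\resolvent_\lambda f \;=\; \int_0^\infty e^{-\lambda t}\,\transit_t f\,dt \qquad\text{for all } f\in C_0,
\]
the integral being a Bochner integral of the bounded continuous map $t\mapsto \transit_t f$. Applying the same to $\bar\generator$ and $\bar\transit_t$, the hypothesis $\generator=\bar\generator$ forces $\resolvent_\lambda=\bar\resolvent_\lambda$ for all $\lambda>0$, hence the two Banach-space-valued functions $t\mapsto\transit_t f$ and $t\mapsto\bar\transit_t f$ have the same Laplace transform for every $f\in C_0$. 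By uniqueness of the Laplace transform for continuous functions — reduced, if one wishes, to the scalar case by pairing with an arbitrary bounded linear functional and applying Hahn--Banach — we get $\transit_t f=\bar\transit_t f$ for a.e.\ $t$, and then for all $t\ge 0$ by strong continuity; thus $\transit_t=\bar\transit_t$.

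An equivalent route that sidesteps Laplace inversion is the \emph{variation of constants} argument: for $f\in D(\bar\generator)=D(\generator)$ the map $s\mapsto \transit_s\bar\transit_{t-s}f$ on $[0,t]$ is differentiable with derivative $\transit_s(\generator-\bar\generator)\bar\transit_{t-s}f = 0$, using that $\bar\transit_r$ preserves $D(\bar\generator)$, that $\generator$ commutes with $\transit_s$ on its domain, and that $r\mapsto\transit_r g,\ \bar\transit_r g$ are differentiable for $g\in D(\generator)$; evaluating at $s=0$ and $s=t$ yields $\bar\transit_t f=\transit_t f$, and density of $D(\generator)$ in $C_0$ together with $\|\transit_t\|,\|\bar\transit_t\|\le 1$ extends the identity to all of $C_0$. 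In either approach the only delicate points are the standard operator-calculus facts (the resolvent representation, or the differentiability of $s\mapsto\transit_s\bar\transit_{t-s}f$), which are classical consequences of the Hille--Yosida theorem and involve nothing specific to our It\^o diffusions; I do not expect a genuine obstacle beyond stating these carefully.
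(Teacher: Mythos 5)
Your proposal is correct and takes essentially the same route as the paper: both identify the resolvent $(\lambda \mI-\generator)^{-1}$ with the Laplace transform $\int_0^\infty e^{-\lambda t}\transit_t f\,dt$, deduce equality of resolvents from $\generator=\bar{\generator}$, and conclude $\transit_t=\bar{\transit}_t$ by uniqueness of the Laplace transform, with your extra care (Bochner integral, Hahn--Banach reduction to the scalar case, strong continuity to upgrade a.e.\ equality) and the alternative variation-of-constants argument merely tightening or supplementing that same argument. The only immaterial divergence is the easy direction, which you obtain directly from the definition of the generator whereas the paper recovers $\generator$ from the resolvent identity $\generator=\lambda\mI-\resolvent_\lambda^{-1}$.
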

\begin{proof}
    We define the resolvent of a Feller process with $\lambda>0$ as:
    \begin{equation}
        \resolvent_\lambda f = \int_{0}^\infty \exp(-\lambda t)\textstyle{\transit_t}fdt
        \label{eq: resolvent}
    \end{equation}
    with $f\in C_0$. This is the Laplace transform of $\transit_t f$. From \cite{oksendal2003stochastic}, we know $\resolvent_\lambda = (\lambda \mI - \generator)^{-1}$. Therefore, if $\generator = \bar{\generator}$, then for $\lambda>0$, the resolvent $\resolvent_\lambda = (\lambda\mI-\generator)^{-1}=(\lambda \mI-\bar{\generator})^{-1}=\bar{\resolvent}_\lambda$. 
    Therefore, for all $h\in C_0$, they define the same Laplace transform of $\transit_t h$. From the uniqueness of Laplace transform, we have $\transit_t=\bar{\transit}_t$. 

    Similarly, if $\transit_t=\bar{\transit}_t$, we have $\resolvent_\lambda=\bar{\resolvent}_\lambda$ from the definition of resolvent. Thus, $\generator = \lambda\mI - \resolvent_\lambda^{-1}=\lambda\mI-\bar{\resolvent}_\lambda^{-1}=\bar{\generator}$.
\end{proof}

Now, we can prove theorem \ref{thm: structure identifiability observation}. 
\begin{proof}
    Suppose we have two different observation process defined with $\mG\neq \mGb$. Then, from Lemma \ref{lem: Identifiability of Euler SEM}, with any $\Delta>0$, their Euler transition distribution $\pb(\bar{\mX}^\Delta_{t+1}\vert \bar{\mX}^\Delta_t=\va)\neq p({\mX}^\Delta_{t+1}\vert {\mX}^\Delta_t=\va)$. Thus, from Lemma \ref{lem: generator and Euler SEM}, these two It\^o diffusions have different generators $\generator \neq \bar{\generator}$. From assumption \ref{assump: Global Lipschitz}, the solutions of these two It\^o diffusions are Feller processes. From Lemma \ref{lem: generator determine semigroup}, if $\generator \neq \bar{\generator}$, their semigroup $\transit_t\neq \bar{\transit}_t$, which results in different observation distributions of $\mX_t, \mXb_t$. 
\end{proof}

\subsection{Identifiability of latent SDE}
\label{subapp: identifiability of latent sde}
We begin by re-stating \cref{thm: identifiability of latent SDE}:
\IdenLatent*
We follow the same proof strategy as \cite{hasan2021identifying,khemakhem2020variational}.

\begin{proof}
    Let's assume $p(\mX_{t_1},\ldots, \mX_{t_I}) = \pb(\mXb_{t_1},\ldots,\mXb_{t_I})$ even though $\mG\neq \mGb$. Then, for any $t_{i+1}$ and $t_i$, we have $p(\mX_{t_{i+1}},\mX_{t_i}) = \pb(\mXb_{t_{i+1}},\mXb_{t_i})$. Then, we can write 
    \begin{align*}
        p(\mX_{t_{i+1}},\mX_{t_i}) &=\int p(\mZ_{t_{i+1}}, \mZ_\ti, \mX_\tii, \mX_\ti)d\mZ_\tii d\mZ_\ti\\
        &=\int p_z(\mZ_\tii,\mZ_\ti) p_\epsilon(\mX_\tii-\mZ_\tii) p_\epsilon(\mX_\ti-\mZ_\ti)d\mZ_\tii d\mZ_\ti\\
        &=\left[(p_{\cameraready{\epsilon}} \times p_{\cameraready{\epsilon}} ) * p_z\right](\mX_\tii,\mX_\ti)
    \end{align*}
    where $p_\epsilon$ is the noise density for the added observational noise $\bm{\epsilon}$, $p_z$ is the joint density defined by latent It\^o diffusion and $*$ is the convolution operator. Thus, by applying the Fourier transform $\mathcal{F}$, we obtain
    \begin{equation}
        \mathcal{F}(p_\epsilon \times p_\epsilon)(\omega)\times \mathcal{F}(p_z)(\omega) = \mathcal{F}(p_\epsilon \times p_\epsilon)(\omega)\times \mathcal{F}(\pb_z)(\omega)
    \end{equation}
    So $\mathcal{F}(p_z)=\mathcal{F}(\pb_z)$. Then, by inverse Fourier transform, we have $p_z(\mZ_\tii,\mZ_\ti)= \pb_z(\mZb_\tii,\mZb_\ti)$. 

    If the above distributions are obtained by discretizing the It\^o diffusion with a fixed step size $\Delta$, they become the corresponding discretized distribution $p^\Delta(\mZ^\Delta_\tii,\mZ^\Delta_\ti)$ (i.e.~defined by Euler SEM). Then the transition density $p^\Delta(\mZ^\Delta_\tii\vert \mZ^\Delta_\ti) = \pb^\Delta(\mZb^\Delta_\tii\vert \mZb^\Delta_\ti)$. From Lemma \ref{lem: Identifiability of Euler SEM}, we have $\mG=\mGb$, resulting in a contradiction. Thus, $p^\Delta(\mX_{t_1},\ldots,\mX_{t_I})\neq \pb^\Delta(\mXb_{t_1},\ldots, \mXb_{t_I})$. 

    If we have a fixed time range $[0,T]$, then, when we have infinite observations $I\rightarrow \infty$, the observation time $t$ follows an independent temporal point process with intensity $\lim_{dt\rightarrow 0} Pr(\text{observe in }[t,t+dt]\vert \mathcal{H}_t)>0$ where $\mathcal{H}_t$ is the filtration.
    Thus, for arbitrary time interval $\Delta>0$, we have $p(\mZ_{t+\Delta},\mZ_t)=\pb(\mZb_{t+\Delta},\mZb_t)$. Since this holds for arbitrarily small $\Delta>0$, this equality in densities means they define the same transition density $p(\mZ_{t+\Delta}\vert \mZ_t)= \pb(\mZb_{t+\Delta}\vert \mZb_{t})$ as $\Delta\rightarrow 0$. By definition of the Feller transition semigroup, we have $\transit_t=\bar{\transit}_t$. From Lemma \ref{lem: generator determine semigroup}, $\generator = \bar{\generator}$ and $\mG=\mGb$ (Lemma \ref{lem: generator and Euler SEM}, \ref{lem: Identifiability of Euler SEM}). This leads to contradiction, meaning that $p(\mX_{t_1},\ldots, \mX_{t_I})\neq \pb(\mX_{t_1},\ldots, \mX_{t_I})$ when $I \rightarrow \infty$. 
\end{proof}

\subsection{Recovery of the ground truth graph}
\label{subapp: recovery ground truth}
Before diving into the proof of \cref{thm: Consistency}, we introduce some necessary assumptions:
\begin{assumption}[Correctly specified model]
    We say a model is correctly specified w.r.t. the ground truth data generating mechanism iff.~there exists a model parameter such that the model coincides with the generating mechanism. 
    \label{assump: model specification}
\end{assumption}

\begin{assumption}[Expressive posterior process]
    For a given prior parameter $\theta$, we say the approximate posterior process (\cref{eq: posterior process}) is expressive enough if there exists a measurable function $\vu(\mZ_t)$ such that (i) $\vgg(\mZ_t)\vu(\mZ_t)=\vfg(\mZ_t)-\vh_\phi(\mZ, t, \mG)$; (ii) $\vu(\mZ_t)$ satisfies Novikov's condition and (iii) we define
    \begin{equation}
        \mM_T = \exp\left(-\frac{1}{2}\int_0^T \vert \vu(\mZ_t) \vert^2dt-\int_0^T\vu(\mZ_t)^Td\mW_t\right)
        \label{eq: Girsanov M}
    \end{equation}
    and for given latent states $\mZ_{t_1},\ldots, \mZ_{t_I}$ and corresponding observations $\mX_{t_1},\ldots, \mX_{t_I}$ with $0\leq t_1\leq t_2 \leq ...\leq t_I\leq T$, $\mM_T$ can approximate the following arbitrarily well:
    \begin{equation}
        \mM_T\approx \frac{\prod_{i=1}^I p(\mX_{t_i}\vert \mZ_{t_i},\mG)}{p(\mX_{t_1},\ldots,\mZ_{t_I}|\mG)}
    \end{equation}
    \label{assump: expressive posterior}
\end{assumption}
This assumption is to make sure the approximate posterior process is expressive enough to make the variational bound tight. Since we use neural networks to define the drift and diffusion functions, the corresponding approximate posterior is flexible. In fact, \cite{tzen2019theoretical} showed that the diffusion defined by \cref{eq: posterior process} can be used to obtain samples from any distributions whose Radon-Nikodym derivative w.r.t.~standard Gaussian measure can be represented by neural networks. Due to the universal approximation theorem for neural networks \citep{hornik1989multilayer}, the corresponding posterior is indeed flexible. 

First, we can re-write the ELBO (\cref{eq: Overall ELBO}) (for a single time series) as the following: 
\begin{equation}
    \log p(\mX_{t_1},\ldots,\mX_{t_I})\geq \E_{\mG\sim q_\phi(\mG)}\left[\E_{P}\left[\sum_{i=1}^I \log p(\mX_{t_i}\vert \mZt_{t_i})-\frac{1}{2}\int_{0}^T\vert \vu(\mZt_t)\vert^2dt \right]\right] - \KL[q_\phi(\mG)\Vert p(\mG)]
    \label{eq: ELBO}
\end{equation}
where $P$ is the probability measure in the filtered probability space $(\Sigma, \mathcal{F}, \{\mathcal{F}\}_{0\leq t\leq T}, P)$, and $\mZt_t$ is the path sampled from the approximate posterior process (\cref{eq: posterior process}). 
Let's restate the theorem:
\Consistency*

\begin{proof}
    First, we want to show that the term inside the $\E_{\mG\sim q_{\phi}(\mG)}[\cdot]$ represents the $\log p(\mX_{t_1},\ldots,\mX_{t_I}|\mG)$.

    We define a measurable function $\vu(\mZ_t)$ that satisfies Novikov's condition. From the Girsanov theorem, we can construct another process
    \begin{equation}
        d\Wt=\vu(\mZ_t)dt+d\mW_t
        \label{eq: proof Girsanov brownian}
    \end{equation}
    and another probability measure $Q$ s.t.~$\Wt$ is a Brownian motion under measure $Q$ with
    \begin{equation}
        \frac{dQ}{dP}=\exp\left(-\frac{1}{2}\int_0^T \vert \vu(\mZ_t)\vert^2dt-\int_0^T\vu(\mZ_t)^Td\mW_t\right)
        \label{eq: Girsanov dqdp}
    \end{equation}
    where $P$ is the probability measure associated with the original Brownian motion $\mW_t$.
    From \cite{boue1998variational, tzen2019neural}, we have the following variational formulation:
    \begin{equation}
        \log \E_P\left[ \prod_{i=1}^I p(\mX_{t_i}\vert \mZ_{t_i},\mG)\right]=\sup_{Q\in \mathbb{P}}\left\{-\KL[Q\Vert P]+\E_Q\left[\sum_{i=1}^I \log p(\mX_{t_i}\vert \mZ_{t_i},\mG)\right]\right\}
        \label{eq: Girsanov variational formulation}
    \end{equation}
    where $\mathbb{P}$ represents the set of probability measures for the path $\mZ_t$. 
    Assume measure $Q$ is constructed by $\vu$, we can write down $\KL[Q\Vert P]$ by substituting \cref{eq: Girsanov dqdp}:
    \begin{align*}
        \KL[Q\Vert P] &= \E_Q[\log \frac{dQ}{dP}]\\
        &=\int\left[-\frac{1}{2}\int_0^T \vert\vu(\mZ_t)\vert^2dt - \int_0^T \vu(\mZ_t)^Td\mW_t\right]dQ\\
        &=\int\left[-\frac{1}{2}\int_0^T \vert\vu(\mZ_t)\vert^2dt+\int_0^T \vert\vu(\mZ_t)\vert^2dt\right]dQ\\
        &=\E_Q\left[\frac{1}{2}\int_0^T\vert\vu(\mZ_t)\vert^2dt\right].
    \end{align*}
    The third equality can be obtained by manipulating \cref{eq: proof Girsanov brownian}:
    \begin{align*}
        \vu(\mZ_t)^Td\Wt_t&=\vert\vu(\mZ_t)\vert^2dt+\vu(\mZ_t)^Td\mW_t\\
        \Rightarrow \underbrace{\E_Q\left[\int_0^T\vu^Td\Wt_t\right]}_{=0}&=\E_Q\left[\int_0^T\vert\vu\vert^2dt\right]+\E_Q\left[\int_0^T \vu^Td\mW_t\right]
    \end{align*}
    The highlighted term is $0$ due to the martingale property under measure $Q$. Thus, we have 
    \begin{equation}
        \E_Q\left[\int_0^T\vu^Td\mW_t\right] = -\E_Q\left[\int_0^T\vert\vu\vert^2dt\right]
    \end{equation}

    Now, let's define 
    \begin{equation}
        \vu(\mZ_t)=\vgg(\mZ_t)^{-1}[\vf_\theta(\mZ_t,\mG)-\vh_{\phi}(\mZ_t, t,\mG)]
    \end{equation}
    Note that this is different to the original $\vu$ (\cref{eq: u}) by a minus sign. But this does not affect the derivation because we care about $\vu^2$. 
    By simple manipulation of \cref{eq: proof Girsanov brownian}, we have
    \begin{equation}
        \vh_\phi(\mZ_t, t, \mG)dt+\vgg(\mZ_t)d\Wt_t=\vfg(\mZ_t)dt+\vgg(\mZ_t)d\mW_t
    \end{equation}
    This means the prior process (\cref{eq: CRhino formulation}) under probability measure $Q$ is equivalent to the posterior process (\cref{eq: posterior process}) under probability measure $P$. 
    Next, we can change the probability measure of \cref{eq: Girsanov variational formulation}:
    \begin{align*}
        &\sup_{Q\in\mathbb{P}}\left\{\E_Q\left[\sum_{i=1}^I \log p(\mX_{t_i}\vert \mZ_{t_i},\mG)-\frac{1}{2}\int_0^T\vert\vu(\mZ_t)\vert^2dt\right]\right\}\\
        &=\sup_{\vu}\left\{\E_P\left[\sum_{i=1}^I \log p(\mX_{t_i}\vert \mZt_{t_i},\mG)-\frac{1}{2}\int_0^T\vert\vu(\mZt_t)\vert^2dt\right]\right\}
    \end{align*}
    where the second equality is obtained since $\frac{dQ}{dP}$ is fully determined by function $\vu$, and $\mZt_t$ is obtained from the posterior process \cref{eq: posterior process}.  This equation is exactly the term inside $\E_{\mG\sim q(\mG)}[\cdot]$ since $\vgg(\mZ_t)^{-2}[\vfg(\mZ_t)-\vh_\phi(\mZ_t,t,\mG)]^2=\vgg(\mZ_t)^{-2}[\vh_\phi(\mZ_t,t,\mG)-\vfg(\mZ_t)]^2$.

    From Proposition 2.4.2 in \citep{dupuis2011weak}, the supremum is uniquely obtained at 
    \begin{align*}
        \frac{dQ*}{dP}=\frac{\prod_{i=1}^Ip(\mX_{t_i}\vert \mZ_{t_i},\mG)}{\E_P[\prod_{i=1}^Ip(\mX_{t_i}\vert \mZ_{t_i},\mG)]}.
    \end{align*}
    From assumption \ref{assump: expressive posterior}, the measure $Q$ induced by $\vu$ can approximate the above arbitrarily well. Thus, the \cref{eq: ELBO} can be written as:
    \begin{align*}
        \sup_{q(\mG),\theta,\phi} \text{ELBO} &=  \sup_{q(\mG)}\left[\log p(\mX_{t_1},\ldots,\mX_{t_I}\vert \mG) \right]-\KL[q(\mG)\Vert p(\mG)]
    \end{align*}
    We divide the ELBO by $\frac{1}{I}$, and let $I\rightarrow \infty$, we have
    \begin{align*}
        &\lim_{I\rightarrow \infty} \frac{1}{I}\left[\log p(\mX_{t_1},\ldots,\mX_{t_I}\vert \mG) \right]-\frac{1}{I}KL[q(\mG)\Vert p(\mG)]\\
        =& \lim_{I\rightarrow \infty} \frac{1}{I}\left[\log p(\mX_{t_1},\ldots,\mX_{t_I}\vert \mG) \right]\\
        \leq& \lim_{I\rightarrow \infty}\frac{1}{I}\log p(\mX_{t_1},\ldots, \mX_{t_I};\mG^*)
    \end{align*}
    where the first equality is obtained by the fact $\KL[q(\mG)\Vert p(\mG)]<\infty$, and the second inequality is due to the property of the ground truth likelihood. 
    From the identifiability theorem \ref{thm: identifiability of latent SDE}, the equality is uniquely obtained at $q(\mG)=\delta(\mG^*)$, and the learned system recovers the true generating mechanism under infinite data limits. 
\end{proof}
\section{Model architecture}
\label{app: model architecture}

In this section, we describe the model architecture details used in our experiments for \ModelName{}.

\paragraph{Prior Drift Function and Diffusion Function} As described in Section \ref{sec: Methodology}, following \cite{geffner2022deep}, we use the following design for the prior drift function $\vf_{G,d}(\mZ_t)$ and diffusion function $\vg_{G,d}(\mZ_t)$:
\begin{equation}
    \vf_{G,d}(\mZ_t) = \zeta\left(\sum_{i=1}^D G_{i,d}l(Z_{t,i}, \ve_{i}), \ve_{d}\right)
\end{equation}
where $\zeta: \mathbb{R}^{D_g\times D_e} \to \mathbb{R}^{D}$, $l: \mathbb{R}^{D \times D_e} \to \mathbb{R}^{D_g}$ are neural networks, and $\ve_i \in \mathbb{R}^{D_e}$ is a trainable node embedding for the $i^{\text{th}}$ node. The use of node embeddings means that we only need to train two neural networks, regardless of the latent dimensionality $D$. 

We implement both the prior drift and diffusion function using $D_e = D_g = 32$, and as neural networks with two hidden layers of size $\max(2*D, D_e)$ with residual connections.

\paragraph{Posterior Drift Function} In Section \ref{subsec: variational inference}, we described the posterior SDE $d\mZt_t^{(n)}= \vh_{\psi}(\mZt_t^{(n)},t; \graph, \mX^{(n)})dt+\vgg(\mZt_t^{(n)})d\mW_t$, with posterior drift function $\vh_{\psi}(\mZt_t^{(n)},t; \graph, \mX^{(n)})$. We now elaborate on how this is implemented. 

We design an encoder $\mK_{\psi}(t, \mG, \mX)$, that takes as input the time $t$, a graph $\mG$ and time series $\mX = \{\mX_{t_1}, ... \mX_{t_I}\}$, and outputs a \emph{context vector} $\vc \in \mathbb{R}^{D_c}$. This encoder consists of a GRU \citep{cho2014learning} that takes as input all future observations (i.e. $\mX_{t_i}$ s.t. $t_i > t$) in reverse order; and a single linear layer which takes the input (i) the hidden state of the GRU, and (ii) the flattend graph matrix $G$, and output the context vector $\vc$. Note that the GRU only takes as input future observations as the future evolution of the latent state is conditionally independent of past observations given the current latent state. We implement the GRU with hidden size $128$, and choose $D_c=64$ for the size of the context vector.

Then, the posterior drift function $\vh_{\psi}(\mZt_t^{(n)},t; \graph, \mX^{(n)})$ is implemented as a neural network that takes as input $\mZt_t^{(n)}$ and the context vector $\vc$ computed by the encoder, and outputs a vector of dimension $D$. This neural network is a MLP with 1 hidden layer of size $128.$

\paragraph{Posterior Mean and Covariance} In Section \ref{subsec: variational inference}, we also have posterior mean and covariance functions $\vmu_\psi(\graph, \mX^{(n)}): \{0, 1\}^{D \times D} \times \mathbb{R}^{D} \to \mathbb{R}^{D}$ and $\mSigma_\psi(\graph, \mX^{(n)}) : \{0, 1\}^{D \times D} \times \mathbb{R}^{D} \to \mathbb{R}^{D \times D}$ for the initial state. We reuse the encoder $\mK_{\psi}(t, \mG, \mX)$ with $t = 0$ to encode the entire time series and graph, and then implement $\vmu_\psi, \mSigma_\psi$ as a linear transformation of the context vector (i.e. a single linear layer). 

\paragraph{Posterior Graph Distribution} In Section \ref{subsec: variational inference}, we introduced a variational approximation $q_{\phi}(\mG)$ to the true posterior $p(\mG|\mX^{(1)}, ..., \mX^{(N)})$. To implement this, we use a product of independent Bernoulli distributions for each edge. That is, we have:
\begin{equation}
    q_{\phi}(\mG) = \prod_{i, j} \phi_{ij}^{G_{i, j}} (1-\phi_{ij})^{(1-G_{i, j})}
\end{equation}
where $\phi_{ij} \in [0, 1]$ are learnable parameters corresponding to the probability of edge $i \to j$ being present. 

\paragraph{Observational Likelihood} \cameraready{We choose the observational noise $p_{\epsilon_d}$ in the model to follow a  standard Laplace distribution with location $\mu = 0$ and scale $b = 0.01$.}

\section{Baselines}
\label{app: baselines}

We use the following baselines for all our experiments to evaluate the performance of \ModelName{}.
\begin{itemize}
    \item PCMCI+:\cite{runge2018causal,runge2020discovering} proposed a constraint-based causal discovery methods for time series, which leverage the momentary conditional independence test to simultaneously detect the lagged parents and instantaneous effects. This is an improvement over its predecessor called PCMCI, which cannot handle instantaneous effects. In our experiments, we use PCMCI for Netsim and PCMCI+ for the other datasets. 
    We use the opensourced implementation \emph{Tigramite} (\url{https://github.com/jakobrunge/tigramite}).
    \item VARLiNGaM: \cite{hyvarinen2010estimation} proposed a linear vector auto-regressive model to learn from time series observations. It is an extension of LiNGaM \citep{shimizu2006linear}, where its structural identifiability is guaranteed through the non-Gaussian noise assumption. The major limitation is its linear and discrete nature, which cannot model complex interactions and continuous systems. We also use the opensourced \emph{LiNGaM} package (\url{https://lingam.readthedocs.io/en/latest/tutorial/var.html})
    \item CUTS: CUTS \citep{cheng2023cuts} is based on Granger causality, and designed for inferring structures from irregularly sampled time series. It treats the irregular samples as a missing data imputation problem. It is capable of imputing missing observations and inferring the graph at the same time. However, it only supports single time series. We use the authors' opensourced code (\url{https://github.com/jarrycyx/unn}). 
    \item Rhino: \cite{gong2022rhino} proposed one of the most flexible SEM-based temporal structure learning framework that is capable of modelling (1) lagged parents; (2) history-dependent noise and (3) instantaneous effects. Many SEM-based structure learning approach can be regarded as a special case of Rhino. From the discussion in \cref{subsec: comparison to Bellot}, \ModelName{} can be regarded as a continous-time version of Rhino. We use the authors' opensourced implementation (\url{https://github.com/microsoft/causica/tree/v0.0.0}). 
    \item NGM: NGM \citep{bellot2021neural} proposed to use NeuralODE to learn the mean process of the SDE. Since this is the only baseline we are aware of in terms of structure learning under continuous time, this will be used as our main comparison. We use the authors' opensourced code (\url{https://github.com/alexisbellot/Graphical-modelling-continuous-time}).
\end{itemize}
NGM and CUTS are originally designed for single time series setup and cannot handle multiple time series. For fair comparison, we modify them by concatenating the multiple time series into a single one. That is, given $n$ time series $\{\vars^{(n)}\}_{n=1}^{N}$ with observation times $t_1, ..., t_I$, we convert them into a single time series with observation times in $[(n-1)*t_I + t_1, n*t_I]$ for the $n^{\text{th}}$ time series.
Our assumption is that since their learning routines are batched across time points, and the concatenation points are rarely sampled, this should have small impact to the performance in comparison to the benefit of additional data. Empirically, this approach indeed improves the performance over simply selecting a single time series. 

For VARLiNGaM, PCMCI, and Rhino, which cannot handle irregularly sampled data, we use zero-order hold (ZOH) to impute the missing data, which has been found to perform competitively \citep{cheng2023cuts} with other imputation methods such as GP regression and GRIN \citep{cini2022filling}.

\subsection{Comparison to ODE-based structure learning}
\label{app: comparison to Bellot}

In this section, we present an extended version of the example failure case of NGM presented in \cref{subsec: comparison to Bellot}. \citet{bellot2021neural} proposed a structure learning method (NGM) for learning from a single time series generated from a SDE. Their approach learns a neural ODE $d\latentvars(\timects) = \driftfn_{\nnparams}(\latentvars(\timects))d\timects$ that models the mean process of the SDE and extract the graphical structure from the first layer of $\vf_\theta$. Given a single observed trajectory $\vars = \{\vars_{\timects_\timeidx}\}_{\timeidx\in[\maxtimeidx]}$, they assume that the observed data follows a multivariate Gaussian distribution $(\vars_{\timects_1}, .. \vars_{\timects_\maxtimeidx}) \sim \mathcal{N}((\latentvars_{\timects_1}, .. \latentvars_{\timects_\maxtimeidx}), \Sigma)$ with mean process $\mZ_t$ given by the deterministic mean process (ODE), and a diagonal covariance matrix $\Sigma \in \mathbb{R}^{\maxtimeidx \times \maxtimeidx}$. As such, NGM optimizes the following squared loss:
\begin{equation} \label{eqn:bellot_loss}
\sum_{\timeidx = 1}^{\maxtimeidx} \lVert \vars_{\timects_i} - \latentvars_{\timects_i} \rVert^2_2
\end{equation}

Like \ModelName{}, NGM attempts to model the underlying continuous-time dynamics and can naturally handle irregularly sampled data. However, the Gaussianity assumption only holds when the underlying SDE is linear; that is, SDEs of the form $d\vars = (\bm{a}(\timects) \vars + \bm{b}(\timects) )d\timects + \bm{c}(\timects) d\wiener_{\timects}$. For general SDEs where the drift and/or diffusion functions are nonlinear functions of the state, the joint distribution can be far from Gaussian, leading to model misspecification, resulting in the incorrect drift function even if the neural network $\driftfn_{\nnparams}$ has the capacity to express the true drift function.  

Another drawback of learning an ODE mean process using the objective in Equation \ref{eqn:bellot_loss} is that it is difficult to generalise to correctly learn from multiple time series, which can be important for recovering the underlying SDEs in practice since a single time series is just a one trajectory sample from the SDE, and thus cannot represent the trajectory multimodality due to stochasticity. 
In particular, simply computing a batch loss over all time series $\sum_{\smpidx=1}^{\maxsmpidx} \sum_{\timeidx = 1}^{\maxtimeidx} \lVert \vars^{(\smpidx)}_{\timects_i} - \latentvars_{\timects_i} \rVert^2_2$ may fail to recover the underlying dynamics when learning from multiple time series. To demonstrate the above argument, we propose a bi-modal failure case. Consider the following 1D SDE:
\begin{equation}
    d\var = \var d\timects + 0.01 d\wienersingle_{\timects}
\end{equation}
where the trajectory will either go upwards or downwards exponentially (bi-modality)

In Figure \ref{fig:failure_case_data_apx} we show trajectories sampled from this SDE, where the initial state is set to $\var_0 = 0$ for all trajectories. The optimal ODE mean process in terms of (batched) squared loss is given by $d\latentvar = 0d\timects$, whose solution is given by the horizontal axis; in particular, while true graph by definition contains a self-loop, the inferred graph from this ODE has no edges. In Figure \ref{fig:failure_case_bellot_apx} we show the ODE mean process $\driftfn_{\nnparams}$ learned by NGM, together with trajectory samples from the corresponding SDE $d\var = \driftfn_{\nnparams}(\var) d\timects + 0.01 d\wienersingle_{\timects}$. The learned ODE mean process (in black) is close to the horizontal axis (note the scale of the vertical axis), with trajectories that do not match the data. On the other hand, in Figure \ref{fig:failure_case_crhino_apx} we see that \ModelName{} successfully learns the underlying SDE with trajectories closely matching the observed data and demonstrating the bi-modal behavior. 

\begin{figure}
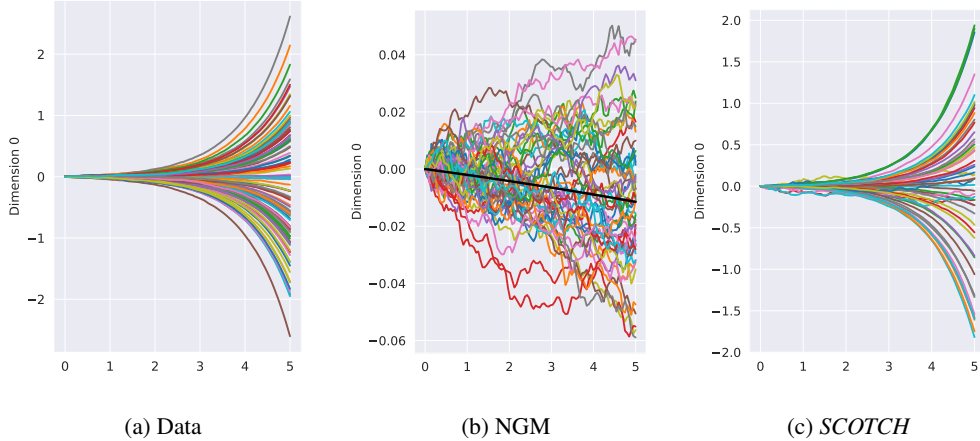

    \centering
    \begin{subfigure}{0.32\linewidth}
        \includegraphics[width=\linewidth]{figures/failure_case/failurecase_data.pdf}
        \caption{Data}
        \label{fig:failure_case_data_apx}
    \end{subfigure}
    \begin{subfigure}{0.32\linewidth}
        \includegraphics[width=\linewidth]{figures/failure_case/failurecase_ngm.pdf}
        \caption{NGM}
        \label{fig:failure_case_bellot_apx}
    \end{subfigure}
    \begin{subfigure}{0.32\linewidth}
        \includegraphics[width=\linewidth]{figures/failure_case/failurecase_crhino.pdf}
        \caption{\ModelName{}}
        \label{fig:failure_case_crhino_apx}
    \end{subfigure}
    \caption{Comparison between NGM and \ModelName{} for simple SDE (note vertical axis scale)}
    \label{fig:failure_case_apx}
\end{figure}

\section{Experiments}
\label{app: Experiments}
\subsection{Choice of SDE solver}
\label{subapp: choice sde solver}
\addition{
There are several choices that can affect the accuracy of the SDE solver used for \ModelName{}. Firstly, discretization step size is an important factor of the solver; a smaller step size generally leads to a more accurate SDE solution, but at the cost of additional time and space complexity. The computational cost (with default Euler discretization) should scale inversely w.r.t.~the step size. In the following, we conducted an initial verification run for the Ecoli1 dataset with half of the original step size reported below in appendix \cref{subapp: dream3}. \Cref{tab: Ecoli1 finer step size} compares the performance with different step sizes. We can see that $\Delta t=0.05$ results in similar performance compared to $\Delta t=0.025$ (while being 2x faster).
Therefore, we decide to use the step size $\Delta t =0.05$. 
Secondly, we chose to use a pathwise gradient estimator rather than the adjoint method \citep{li2020scalable}, as we found this was more efficient time-wise and we did not run into space limitations. Although theoretically, they should give the same performance, in practice, the pathwise gradient estimator may have an advanage that computing its gradient does not require solving another SDE, which is subject to the accuracy of the SDE solver. 
It is also possible to use higher-order numerical solvers such as the Milstein method; however we did not explore this thoroughly in this work.}

\begin{table}[]
\centering
{
\begin{tabular}{l|l}
\hline
                     & \multicolumn{1}{c}{AUROC} \\ \hline
$\Delta t= 0.025$ & \textbf{0.747$\pm$0.005}        \\
$\Delta t=0.05$      & \textbf{0.752$\pm$0.008}     \\ \hline
\end{tabular}}
\label{tab: Ecoli1 finer step size}
\caption{Performance comparisons between different choice of discretization step size $\Delta t$ for SDE solver.}
\end{table}

\subsection{Comparison to Latent SDEs}

\begin{table}[]
\centering
{
\begin{tabular}{l|l}
\hline
\textbf{Method} & \textbf{AUROC} \\ \hline
PCMCI+          & 0.530 $\pm$ 0.002         \\
NGM             & 0.611  $\pm$ 0.002         \\
CUTS            & 0.543  $\pm$ 0.003         \\
Rhino           & 0.685   $\pm$ 0.003        \\
SCOTCH          & \textbf{0.752 $\pm$ 0.008} \\
LSDE            & 0.496 $\pm$ 0.021 \\        
\end{tabular}}
\caption{Performance comparison between methods on DREAM3 Ecoli1 dataset. LSDE refers to latent SDE + extracting first layer weights.}
\label{tab: latent sde graph}
\end{table}

\addition{
Though appealing at first glance, attempting to directly extract graphical structure from SDEs learned using existing methods, such as that of \citet{li2020scalable}, is very challenging. Firstly, to extract the signature graph, one would have to evaluate the partial derivative of the drift and diffusion networks at every input point in the input domain, which is not practical. 
Secondly, the learned drift and diffusion functions may have different graphs, and it is unclear how we should combine these. Thirdly, there are no theoretical results to justify this approach (prior to our paper's theory).
For these reasons, prior work does not admit an easy way to extract structure. \\ \\
In order to construct an simple empirical baseline following this strategy, we follow the setup of \citet{li2020scalable}, and implement each output dimension of the drift and diffusion functions as a separate neural network, i.e. 
\begin{equation}
    \vf = [f_1, ..., f_D]^T, \vg = [g_1, ..., g_D]^T
\end{equation}
Using e.g. $\mA_{g_j}$ to denote the weight matrix of the first layer of $g_j$, and $\mA_{g_j}^k$ to denote the $k^{\textnormal{th}}$ column of that matrix (corresponding to the $k^{\textnormal{th}}$ input dimension, then we define:
\begin{equation}
    \mH_{k, j} = \max(|\mA_{f_j}^k|_2, |\mA_{g_j}^k|_2)
\end{equation}
to be our (weighted) estimate of the graph structure. This has the property that whenever $\mH_{k, j} = 0$, then $\frac{\partial f_j}{\partial x_k} = 0$ and $\frac{\partial g_j}{\partial x_k} = 0$. This can be extracted from a learned SDE, and we can compute an AUROC using the weights as confidence scores. \\ \\
Table \ref{tab: latent sde graph} shows results for this approach (which we call LSDE) in comparison with SCOTCH and other baselines on the DREAM3 Ecoli1 dataset. It can be seen that LSDE performs no better than random guessing at identifying the correct edges.
}

\subsection{Synthetic datasets: Lorenz}

\label{subapp: synthetic lorenz}
\subsubsection{Data generation}
\label{subsubapp: lorenz data generation}

For the Lorenz dataset, we simulate time-series data according to the following SDE based on the $D$-dimensional Lorenz-96 system of ODEs:
\begin{equation}
    d\var_{t, d} = ((\var_{t, d+1} - \var_{t, d-2}) \var_{t, d-1} - \var_{t, d}) dt + F + \sigma dW_{t, i}
\end{equation}
where $\var_{t, -1} := \var_{t, D - 1}, \var_{t, 0} := \var_{t, D}$, and $\var_{t, D + 1}:= \var_{t, 1}$, with parameters set as $F=10$ and $\sigma=0.5$. We generate $N = 100$ $10-$dimensional time series, each with length $I = 100$, which are sampled with time interval $1$ starting from $t = 0$ (that is, $t_1 = 0, t_2 = 1, ..., t_{100} = 99)$. The initial state $\var_{0, i}$ is sampled from a standard Gaussian. To simulate the SDE, we use the Euler-Maruyama scheme with step-size $dt = 0.005$.

For this synthetic dataset, we do not add observation noise to the generated time series.

To produce the irregularly sampled versions of the Lorenz dataset, for each time $t = 0, ..., 99$, we randomly drop the observed data at that time with probability $p$, independently at each time $t$ (and for all time series $n = 1, ... 100$). We test using $p = 0.3, 0.6$ in our experiments.

\subsubsection{Hyperparameters}
\label{subsubapp: lorenz hyperparameters}
\paragraph{\ModelName{}} We use Adam \citep{kingma2014adam} optimizer with learning rate $0.003$ and $0.001$ for $p=0.3$ and $p=0.6$, respectively. We set the $\lambda_s=500$ and EM discretization step size $\Delta=1$ for SDE integrator, which coincides with the step size in the data generation process. The time range is set to $[0,100]$. We enable the residual connections for prior drift and diffusion network. We also adopt a learning rate warm-up schedule, where we linearly increase the learning rate from $0$ to the target value within $100$ epochs. We do not mini-batch across the time series. We train $5000$ epochs for convergence. 

\paragraph{NGM} We use the same hyperparameter setup as NGM \citep{bellot2021neural} where we set $0.1$ for the group lasso regularizer and the learning rate as $0.005$. We train NGM for $4000$ epochs in total ($2000$ for the group lasso stage and $2000$ for the adaptive group lasso stage).

\paragraph{VARLiNGaM} We set the lag to be the same as the ground truth $lag=1$, and do not prune the inferred adjacency matrix. 

\paragraph{PCMCI+} We use \emph{partial correlation} as the underlying conditional independence test. We set the maximum lag at $2$, and let the algorithm itself optimise the significance level. We use the threshold $0.07$ to determine the graph from the inferred value matrix.

\paragraph{CUTS} We use the authors' suggested hyperparameters \citep{cheng2023cuts} for the Lorenz dataset.

\paragraph{Rhino} We use hyperparameters with learning rate $0.01$, $70$ epochs of augmented lagrangian training with $6000$ steps each, time lag of $2$, sparsity parameter $\lambda_s = 5$, and enable instantaneous effects.

\subsubsection{Additional results}
\label{subsubapp: lorenz additional}
\Cref{fig: lorenz additional plot} shows the curve of other metrics. 

\begin{figure}[!h]
\centering
\begin{minipage}[c]{0.45\textwidth}
    \centering
    \includegraphics[scale=0.4]{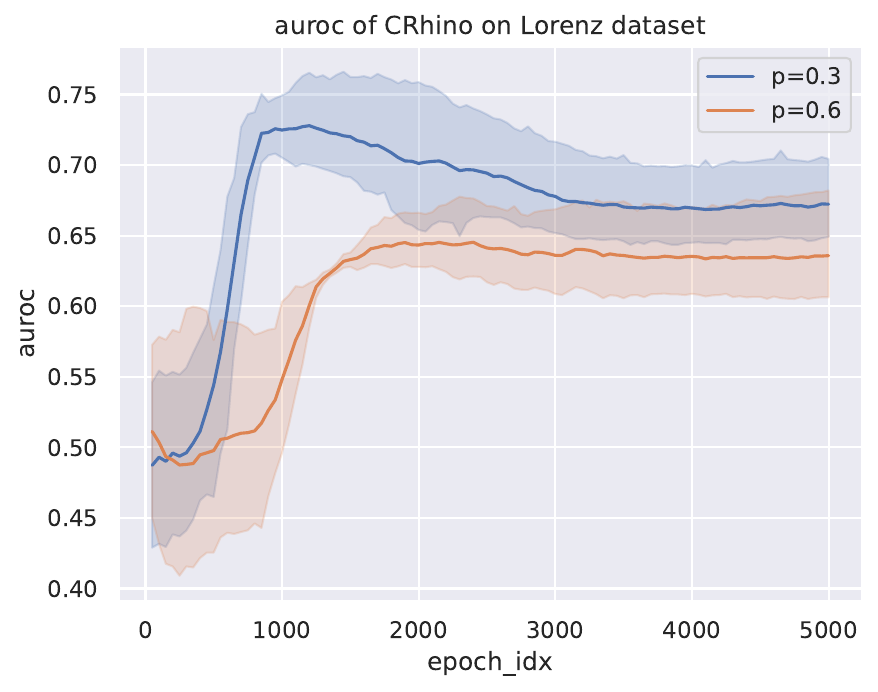}
    \label{fig: lorenz additional plot auroc}
\end{minipage}\hfill
\begin{minipage}[c]{0.45\textwidth}
    \centering
    \includegraphics[scale=0.4]{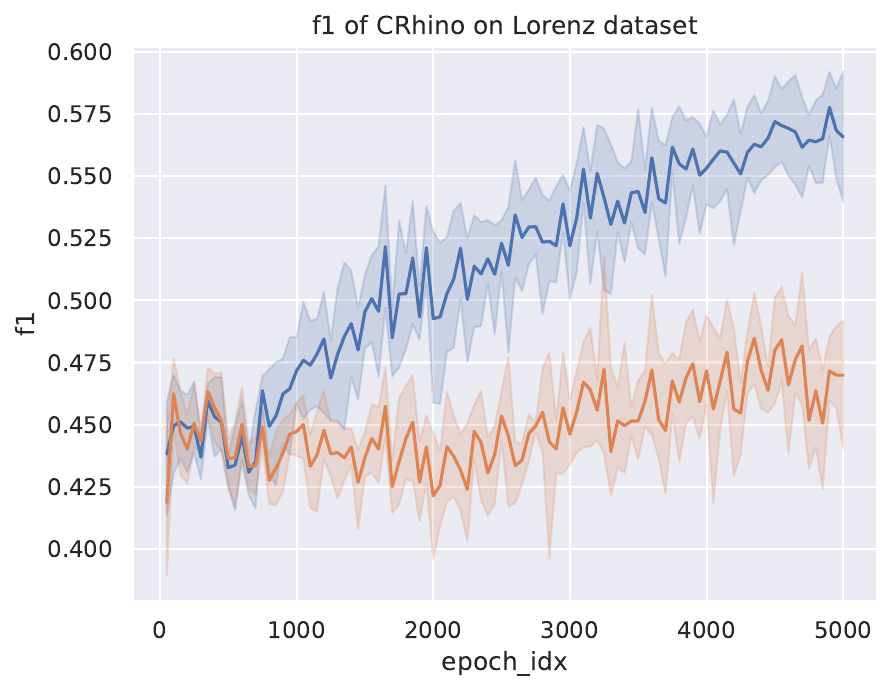}
    \label{fig: lorenz additional plot f1}
\end{minipage}\\
\begin{minipage}[c]{0.45\textwidth}
    \centering
    \includegraphics[scale=0.4]{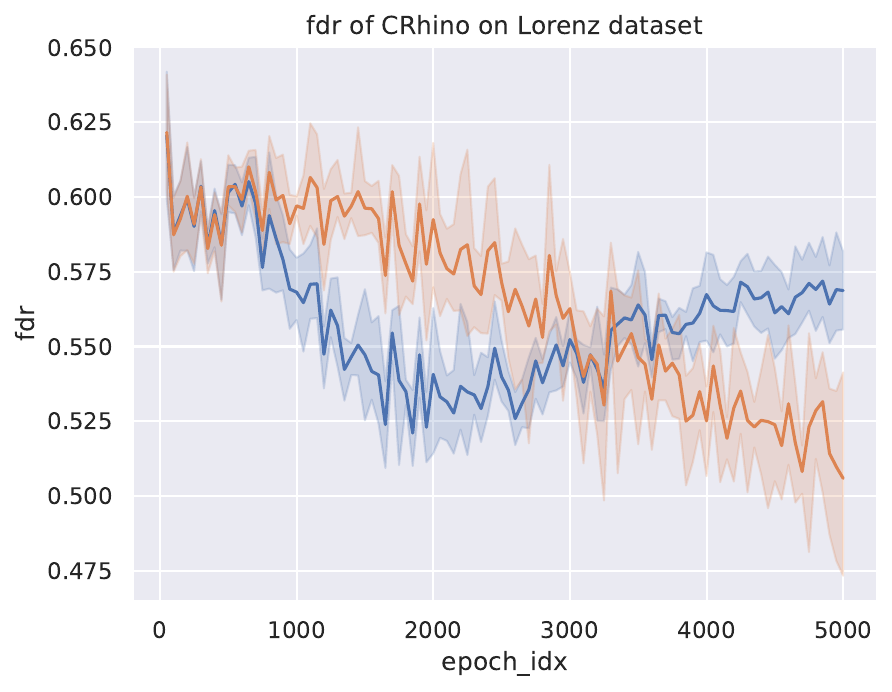}
    \label{fig: lorenz additional plot fdr}
\end{minipage}\hfill
\begin{minipage}[c]{0.45\textwidth}
    \centering
    \includegraphics[scale=0.4]{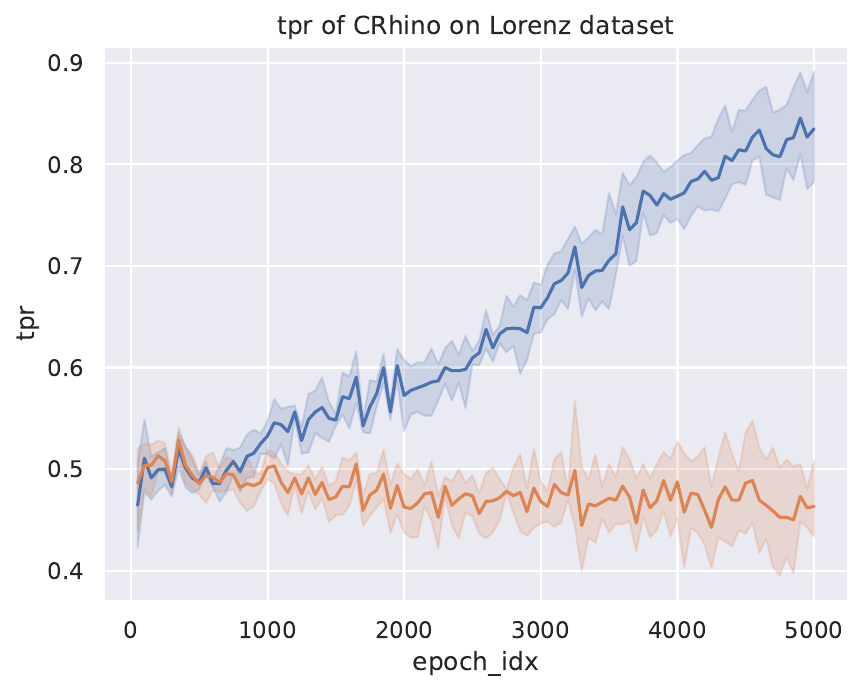}
    \label{fig: lorenz additional plot tpr}
\end{minipage}
    \caption{The AUROC (top left), F1 score (top right), false discovery rate (bottom left) and true positive rate (bottom right) curves of \ModelName{} for Lorenz dataset. The shaded area indicates the $95\%$ confidence intervals. \textcolor{blue}{Blue} color indicates the dataset with missing probability $0.3$ and \textcolor{orange}{orange} color indicates missing probability $0.6$.}
    \label{fig: lorenz additional plot}
\end{figure}

\subsection{Synthetic datasets: Glycolysis}
\label{subapp: synthetic glycolysis}
\subsubsection{Data generation}
\label{subsubapp: glycolysis data generation}

In this synthetic experiment, we generate data according to the system presented by \cite{daniels2015efficient}, which models a glycolyic oscillator. This is a $D=7$ dimensional system with the following equations:
\begin{align*}
    &d\var_{t, 1} = \left(2.5 - \frac{100 \var_{t, 1} \var_{t, 6}}{1 + (\var_{t, 6}/0.52)^4}\right) dt + 0.01 dW_{t, 1} \\
    &d\var_{t, 2} = \left(\frac{200 \var_{t, 1} \var_{t, 6}}{1 + (\var_{t, 6}/0.52)^4} - 6\var_{t, 2}(1 - \var_{t, 5}) - 12\var_{t, 2}\var_{t, 5}\right) dt + 0.01 dW_{t, 2} \\
    &d\var_{t, 3} = \left(6\var_{t, 2} (1 - \var_{t, 5}) - 16\var_{t, 3} (4 - \var_{t, 6})\right) dt + 0.01 dW_{t, 3} \\
    &d\var_{t, 4} = \left(16 \var_{t, 3} (4-\var_{t, 6}) - 100 \var_{t, 4} \var_{t, 5} - 13 (\var_{t, 4} - \var_{t, 7})\right)dt + 0.01 dW_{t, 4} \\
    &d\var_{t, 5} = \left(6 \var_{t, 2} (1 - \var_{t, 5}) - 100 \var_{t, 4} \var_{t, 5} - 12\var_{t, 2}\var_{t, 5} \right)dt + 0.01 dW_{t, 5} \\
    &d\var_{t, 6} = \left(- \frac{200 \var_{t, 1} \var_{t, 6}}{1 + (\var_{t, 6}/0.52)^4} + 32 \var_{t, 3} (4 - \var_{t, 6}) - 1.28 \var_{t, 6}\right) dt + 0.01 dW_{t, 6} \\
    &d\var_{t, 7} = \left(1.3 (\var_{t, 4} - \var_{t, 7}) - 1.8 \var_{t, 7} \right)dt + 0.01 dW_{t, 7}
\end{align*}
As with the Lorenz dataset, we simulate $N = 100$ time series of length $I = 100$, starting at $t = 0$ and with time interval $1$. The initial state is sampled uniformly from the ranges $\var_{0, 1} \in [0.15, 1.60], \var_{0, 2} \in [0.19, 2.16], \var_{0, 3} \in [0.04, 0.20], \var_{0, 4} \in [0.10, 0.35], \var_{0, 5} \in [0.08, 0.30], \var_{0, 6} \in [0.14, 2.67], \var_{0, 7} \in [0.05, 0.10]$, as indicated in \cite{daniels2015efficient}. To simulate the SDE, we use the Euler-Maruyama scheme with step-size $dt = 0.005$.

For this synthetic dataset, we do not add observation noise to the generated time series.

\subsubsection{Hyperparameters}
\label{subsubapp: glycolysis hyperparameter}

\paragraph{\ModelName{}} We use the same hyperparameter as Lorenz experiments. The only differences are that we use learning rate $0.001$ and set $\lambda_s=200$. We train \ModelName{} for $30000$ epochs for convergence. 

\paragraph{NGM} Since \cite{bellot2021neural} did not release the hyperparameters for their glycolysis experiment, we use the default setup in their code. They are the same as the hyperparameters in Lorenz experiments. 

\paragraph{VARLiNGaM} Same as Lorenz experiment setup.

\paragraph{PCMCI+} Same as Lorenz experiment setup.

\paragraph{CUTS} Same as Lorenz experiment setup.

\paragraph{Rhino} Same as Lorenz experiment setup.

\subsubsection{Additional results}
\label{subsubapp: glycolysis additional}

\Cref{tab: glycolysis unnormalized results} shows the performance comparison of \ModelName{} to NGM with the original glycolysis data, where the data have different variable scales. We can observe that this difference in scale does not affect the AUROC of \ModelName{} but greatly affects NGM. Since AUROC is threshold free, we can see that \ModelName{} is more robust in terms of scaling compared to NGM. A possible reason is that the stochastic evolution of the variables in SDE can help stabilise the training when encountering difference in scales, but ODE can easily overshoot due to its deterministic nature. 

\Cref{fig: glycolysis additional plot} shows the curves of different metrics. Interestingly, we can see that data normalisation does not improve the AUROC performance (compared to NGM), but does increase the f1 score. This may be because f1 is threshold sensitive and the default threshold of $0.5$ might not be optimal. 
We can see this through the TPR plot, where \textcolor{orange}{"Original"} has very low value. 
\begin{table}[!h]
\caption{Performance comparison with original Glycolysis data}
\centering
\begin{tabular}{l|lll}
\hline
       & \multicolumn{1}{c}{AUROC} & \multicolumn{1}{c}{TPR $\uparrow$} & \multicolumn{1}{c}{FDR $\downarrow$} \\ \hline
\ModelName{} & \textbf{0.7352$\pm$0.019}          & \textbf{0.3623$\pm$0.007}        & \textbf{0.1575$\pm$0.05}         \\
NGM    &    0.5248$\pm$0.057                       &     0.3478$\pm$0.035                   &     0.4559$\pm$0.094                    \\ \hline
\end{tabular}
\label{tab: glycolysis unnormalized results}
\end{table}

\begin{figure}[!h]
\begin{minipage}[c]{0.45\textwidth}
    \centering
    \includegraphics[scale=0.4]{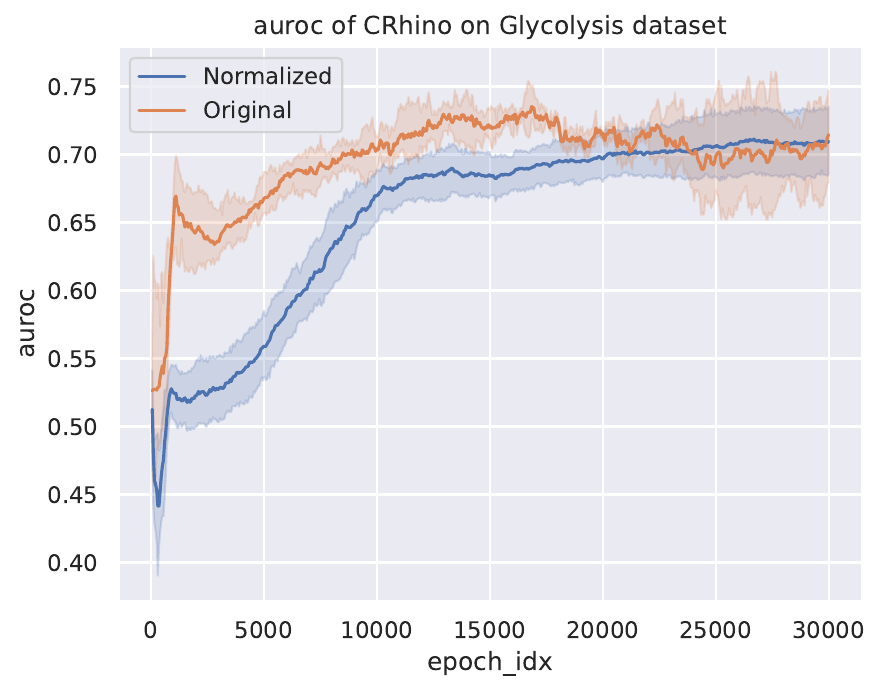}
    \label{fig: glycolysis additional plot auroc}
\end{minipage}\hfill
\begin{minipage}[c]{0.45\textwidth}
    \centering
    \includegraphics[scale=0.4]{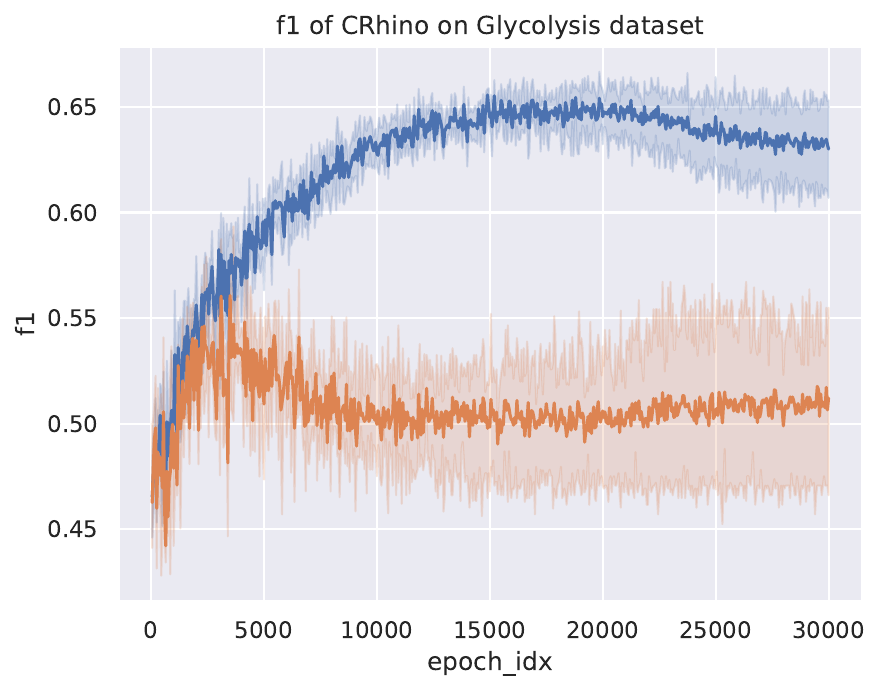}
    \label{fig: glycolysis additional plot f1}
\end{minipage}\\
\begin{minipage}[c]{0.45\textwidth}
    \centering
    \includegraphics[scale=0.4]{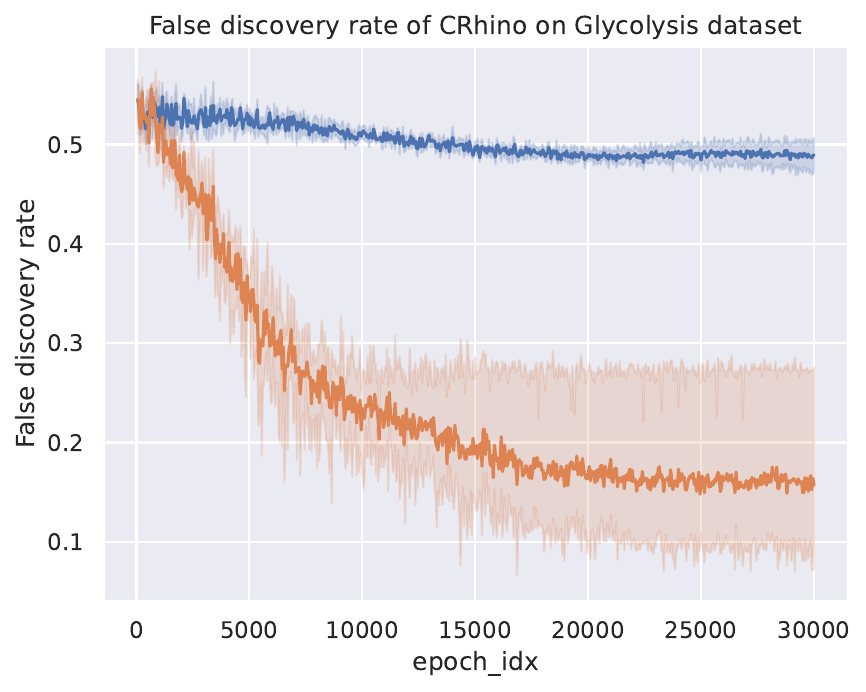}
    \label{fig: glycolysis additional plot fdr}
\end{minipage}\hfill
\begin{minipage}[c]{0.45\textwidth}
    \centering
    \includegraphics[scale=0.4]{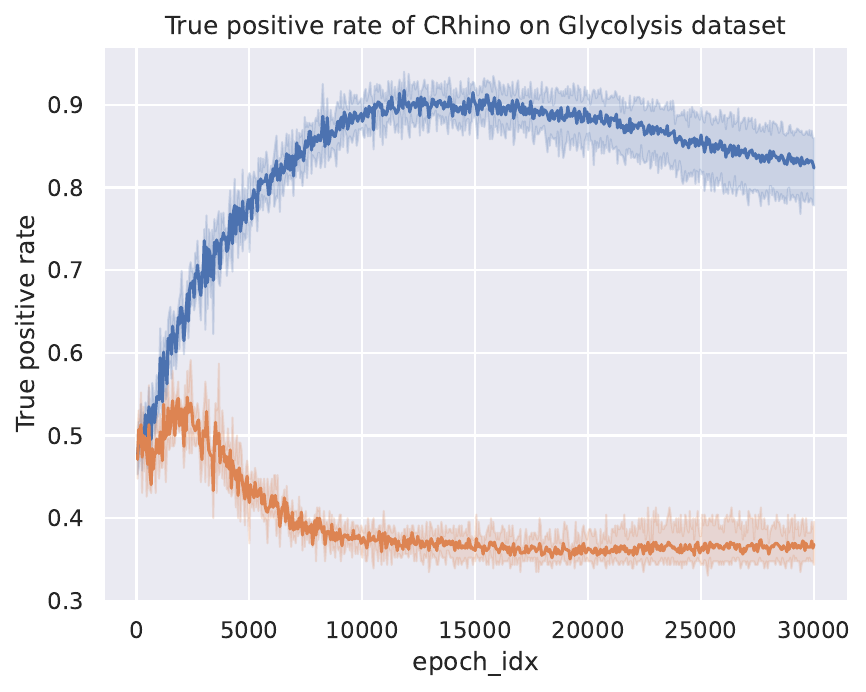}
    \label{fig: glycolysis additional plot tpr}
\end{minipage}
    \caption{The AUROC (top left), F1 score (top right), false discovery rate (bottom left) and true positive rate (bottom right) curves of \ModelName{} for Glycolysis dataset. The shaded area indicates the $95\%$ confidence intervals. \textcolor{blue}{Blue} color indicates the normalized dataset and \textcolor{orange}{orange} color indicates the original dataset.}
    \label{fig: glycolysis additional plot}
\end{figure}

\subsection{Dream3 dataset}
\label{subapp: dream3}
In this appendix, we will include experiment setups, hyperparameters and additional plots for Dream3 experiment.

\subsubsection{Hyperparameters}
\label{subsubapp: dream3 hyperparameters}
\paragraph{\ModelName{}} We follow similar setup as Lorenz experiment. The differences are that the learning rate is $0.001$. The time range is set to $[0,1.05]$ with EM discretization step size $0.05$, which results in exactly $21$ observations for each time series. We choose sparisty coefficient $\lambda_s = 200$. For all sub-datasets, we normalize the data to have $0$ mean and unit variance for each dimension. We use the above hyperparameters for Ecoli1, Ecoli2 and Yeast1 sub-datasets. For Yeast2, we only change the learning rate to be $0.0005$. For Yeast3, we change the $\lambda_s=50$. We train \ModelName{} for $30000$ epochs until convergence.

\paragraph{NGM} For NGM, we follow the same hyperparameter setup as \citep{cheng2023cuts}, where we set the group lasso regulariser as $0.05$, learning rate $0.005$. We train NGM with $4000$ epochs ($2000$ each for group lasso and adaptive group lasso stages). For fair comparison, we use the same observation time (i.e.~equally spaced time points within $[1,1.05]$ and step size $0.05$). 

\paragraph{PCMCI+ and Rhino} As the experiment setup is the same, we directly cite the number from \cite{gong2022rhino}.

\paragraph{CUTS} We use the authors' suggested hyperparameters \citep{cheng2023cuts} for the DREAM3 datasets.

\subsubsection{Additional plots}
\label{subsubapp: dream3 additional plots}
In this section, we include additional metric curves of \ModelName{} in \cref{fig: dream3 additional plot}. Each curve is obtained by averaging over 5 runs and the shaded area indicates the $95\%$ confidence interval. From the value of f1 score, FDR and TPR, we can see DREAM3 is indeed a challenging dataset, where all f1 scores are below 0.5 and FDR only drops to 0.7. From the TPR plot, it is expected to drop at the beginning and then increase during training, which is the case for Ecoli1, Ecoli2 and Yeast1. TPR corresponds well to AUROC and F1 score, since Ecoli1, Ecoli2 and Yeast1 have much better values compared to Yeast2 and Yeast3. 
\begin{figure}[!h]
\begin{minipage}[c]{0.45\textwidth}
    \centering
    \includegraphics[scale=0.4]{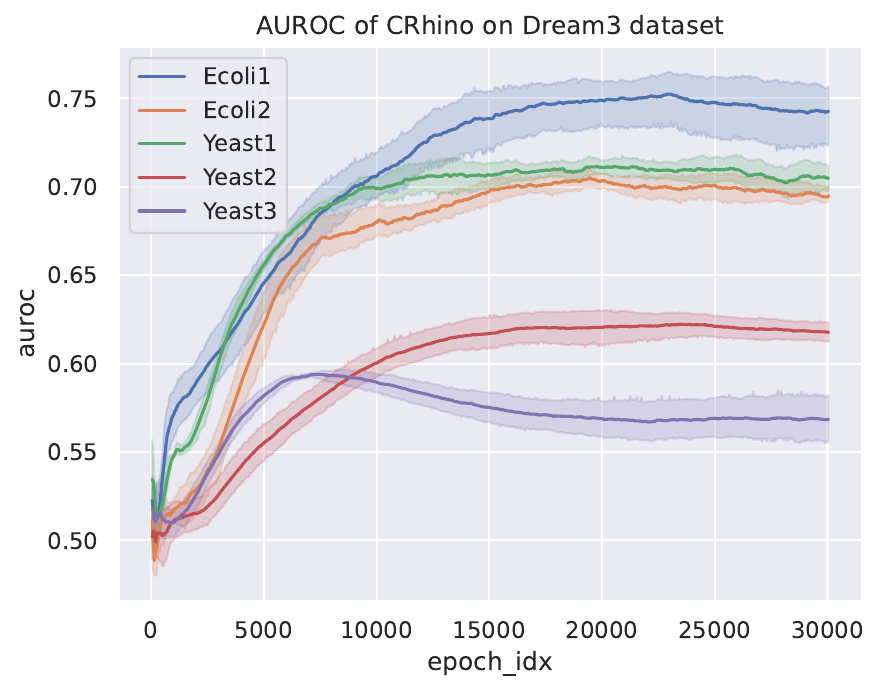}
    \label{fig: dream3 additional plot auroc}
\end{minipage}\hfill
\begin{minipage}[c]{0.45\textwidth}
    \centering
    \includegraphics[scale=0.4]{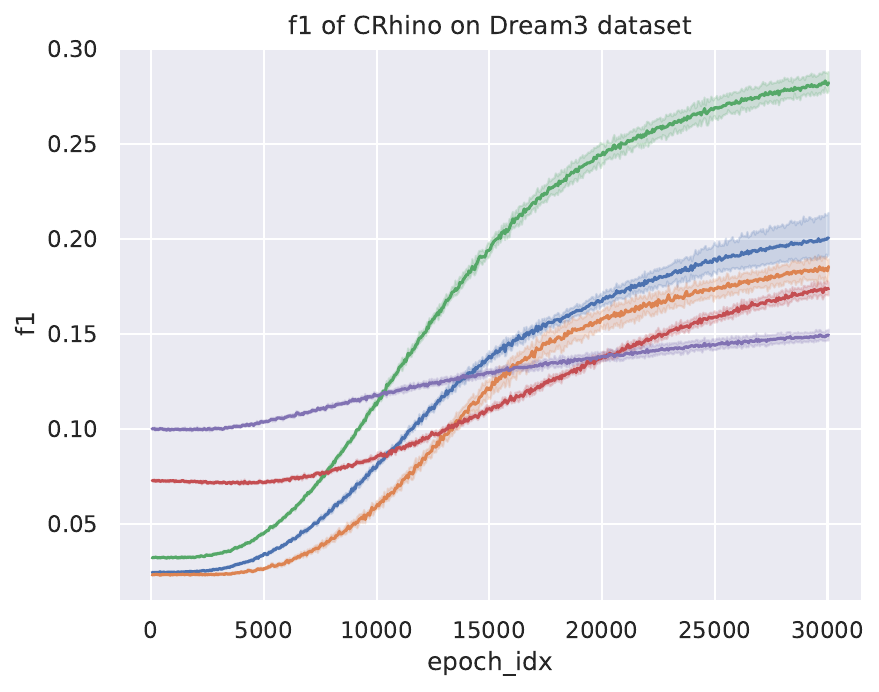}
    \label{fig: dream3 additional plot f1}
\end{minipage}\\
\begin{minipage}[c]{0.45\textwidth}
    \centering
    \includegraphics[scale=0.4]{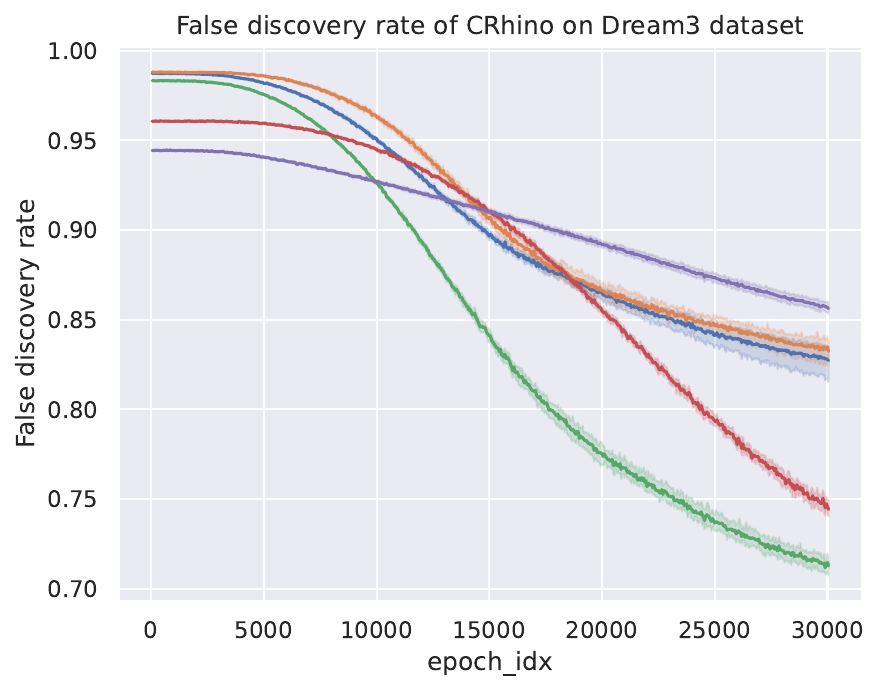}
    \label{fig: dream3 additional plot fdr}
\end{minipage}\hfill
\begin{minipage}[c]{0.45\textwidth}
    \centering
    \includegraphics[scale=0.4]{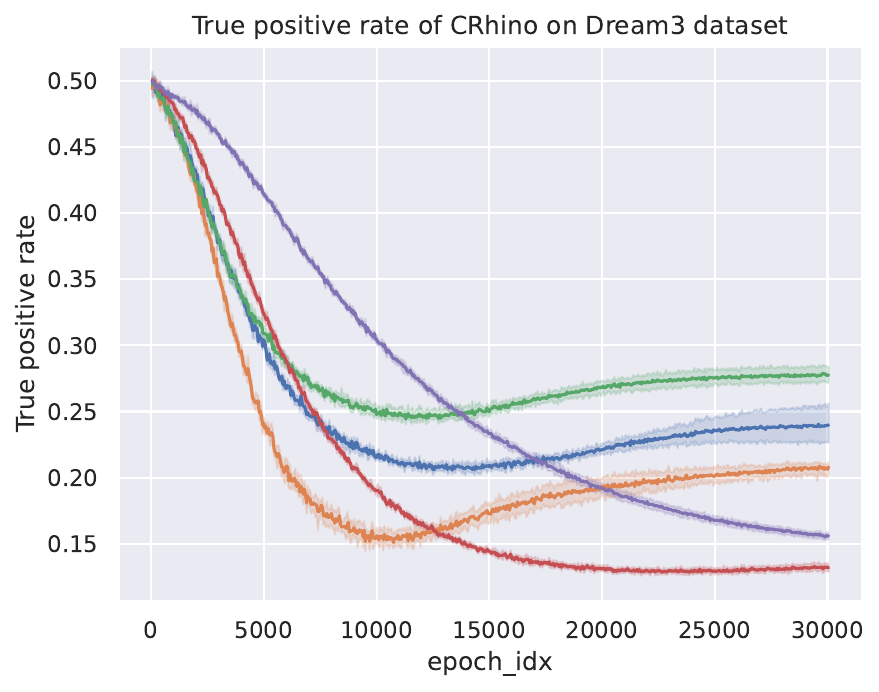}
    \label{fig: dream3 additional plot tpr}
\end{minipage}
    \caption{The AUROC (top left), F1 score (top right), false discovery rate (bottom left) and true positive rate (bottom right) curves of \ModelName{} for each DREAM3 sub-datasets. The shaded area indicates the $95\%$ confidence intervals. }
    \label{fig: dream3 additional plot}
\end{figure}

\subsection{Netsim}
\label{subapp: netsim}
\subsubsection{Experiment setup}
\label{subsubapp: netsim experiment setup}

For the Netsim dataset, we generate the missing data versions in the same way as the Lorenz dataset (see appendix \ref{subapp: synthetic lorenz}). 

\subsubsection{Hyperparameters}
\label{subsubapp: netsim hyperparameter}
\paragraph{\ModelName{}} We use similar hyperparameter setup as Dream3 (\cref{subsubapp: dream3 hyperparameters}), but we change $\lambda_s=1000$ and use the raw data without normalisation. We train \ModelName{} for $10000$ epochs. 

\paragraph{NGM} We follow the same setup as DREAM3 experiment, which also coincides with the setup used in \cite{cheng2023cuts}.  

\paragraph{PCMCI} We follow the same setup as Lorenz and use threshold $0.07$ to infer the graph. 

\paragraph{CUTS} We use the authors' suggested hyperparameters \citep{cheng2023cuts} for the Netsim dataset.

\paragraph{Rhino and Rhino+NoInst} We directly cite the number from \cite{gong2022rhino} for the full dataset, and use the same hyperparameters as \cite{gong2022rhino} for both $p=0.1$ and $p=0.2$ Netsim datasets.

\subsubsection{Additional plots}
\label{subsubapp: netsim additional plots}
We include additional metric curves of \ModelName{} on Netsim dataset in \cref{fig: netsim additional plot}.
From the plot, we can see Netsim is a easier dataset compared to DREAM3 since the dimensionality is much smaller. An interesting observation is f1 score does not necessarily correspond well to auroc since f1 score is threshold dependent (by default we use 0.5) but not auroc. To evaluate the robustness of the model, we decide to report AUROC instead of f1 score. 

\begin{figure}[!h]
\begin{minipage}[c]{0.45\textwidth}
    \centering
    \includegraphics[scale=0.4]{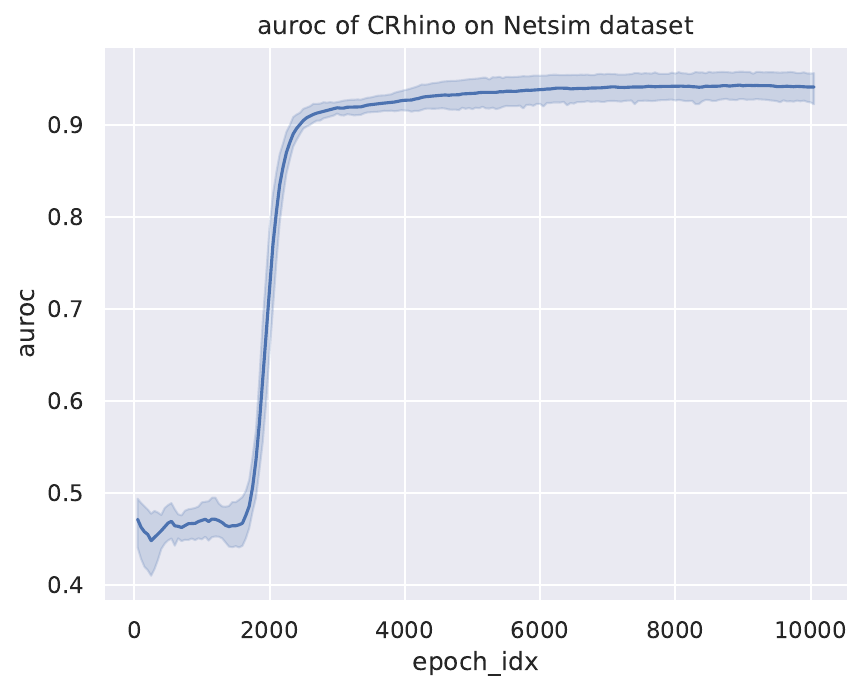}
    \label{fig: netsim additional plot auroc}
\end{minipage}\hfill
\begin{minipage}[c]{0.45\textwidth}
    \centering
    \includegraphics[scale=0.4]{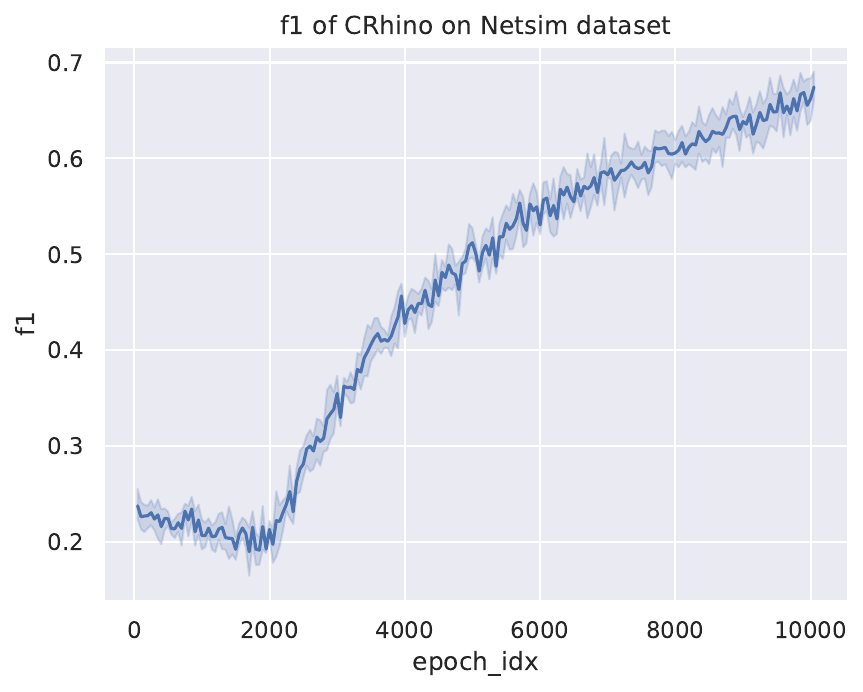}
    \label{fig: netsim additional plot f1}
\end{minipage}\\
\begin{minipage}[c]{0.45\textwidth}
    \centering
    \includegraphics[scale=0.4]{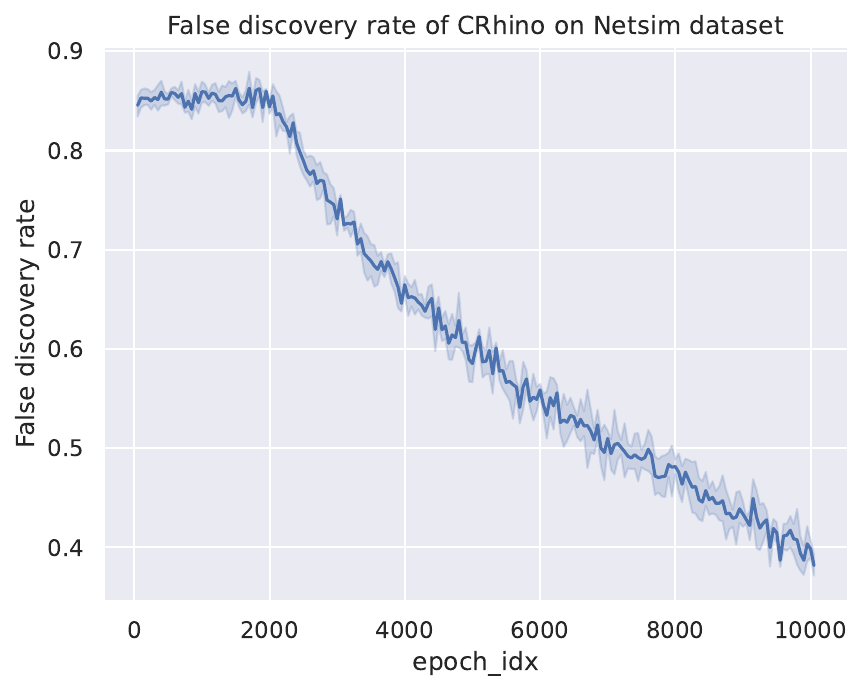}
    \label{fig: netsim additional plot fdr}
\end{minipage}\hfill
\begin{minipage}[c]{0.45\textwidth}
    \centering
    \includegraphics[scale=0.4]{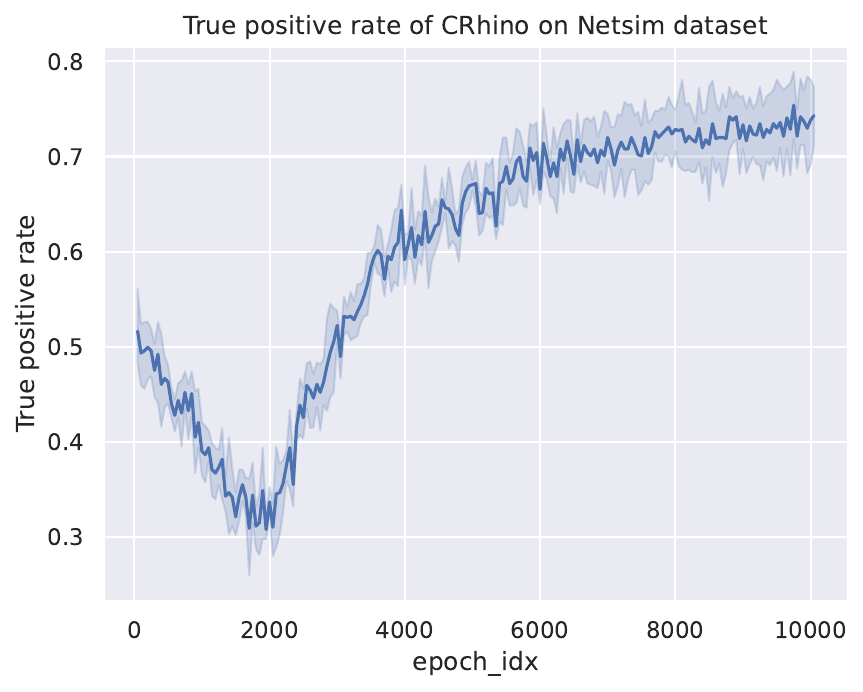}
    \label{fig: netsim additional plot tpr}
\end{minipage}
    \caption{The AUROC (top left), F1 score (top right), false discovery rate (bottom left) and true positive rate (bottom right) curves of \ModelName{} for Netsim dataset. The shaded area indicates the $95\%$ confidence intervals. }
    \label{fig: netsim additional plot}
\end{figure}
\section{Interventions}
\label{app: intervention}

Aside from learning the graphical structure between variables, one might also be interested in analysing the effect of applying external changes, or interventions, to the system. Broadly speaking, there are two types of interventions that we can consider in a continuous-time model. The first is to intervene on the dynamics (that is, the drift or diffusion functions), possibly for a set period of time. 
The second is to directly intervene on the value of (some subset of) variables. The goal is to employ our learned \ModelName{} model in order to predict the effect of these interventions on the underlying system. 

The former is easy to implement as we need only replace (parts of) the learned drift/diffusion function with the intervention. However, the latter is slightly more subtle than it might first appear. \citep{hansen2014causal} proposed to define such an intervention as a function that fixes the value of a particular variable as a function of the other variables. However, it is unclear how we can generalize this to interventions affecting more than one variable. For example, a intervention policy $\latentvar_1 \gets \latentvar_2, \latentvar_2 \gets \latentvar_1 + 1$ creats a feedback loop whose semantics are not easy to resolve. Thus, we propose the following definition:

\begin{definition}[State-space Intervention]
    Given a $\maxvaridx$-dimensional SDE, a state-space intervention is an idempotent function $\intv(t, \latentvars): \mathbb{R}^{\maxvaridx + 1} \to \mathbb{R}^\maxvaridx$; that is, $\intv(t, \intv(t, \latentvars)) \equiv \intv(t, \latentvars)$. The corresponding intervened stochastic process is defined by:
    \begin{equation}
        \tilde{\latentvars}_t = \iota\left(t, \tilde{\latentvars}_0 + \sum_{\varidx \in [\maxvaridx]} \int_{0}^{t} \vf(\tilde{\latentvars}_s) ds +  \sum_{\varidx \in [\maxvaridx]} \int_{0}^{t} \vg(\tilde{\latentvars}_s) d\wiener_s \right)
    \end{equation}
\end{definition}

The requirement of idempotence captures the intuition that applying the same intervention twice should result in the same result. Some examples of interventions are given as follows:
\begin{itemize}
    \item \textbf{Identity}: If $\intv(t, \latentvars) = \latentvars \; \forall t \in [T_1, T_2], \latentvars \in \mathbb{R}^{\maxvaridx}$, then the system evolves accoridng to the original SDE in this time period, with initial state $\tilde{\latentvars}_{T_1}$.
    \item \textbf{Ordered Intervention}: Given some ordered subset of the variables, we can consider intervening on each variable in order, as a function of the previous variables in the order. That is, we restrict each dimension $\intv_i$ of the intervention output to be of the form
    \begin{equation}
        \intv_i(t, \latentvars_t) = \intv_i(t, \latentvars_{t, <i})
    \end{equation}
    where $\latentvars_{t, <i} = \{\latentvars_{t, j} : j < i\}$. It can easily be seen that $\intv$ is always idempotent in this case.
    \item \textbf{Projection}: Another example of an idempotent function is a projection. This could simulate a setting where external force is applied to ensure the SDE trajectories satisfy spatial constraints. Note that a projection cannot necessarily be expressed as an ordered intervention (e.g. consider projection onto a sphere).
\end{itemize}

In practice, we implement state-space interventions in SDEs learned from \ModelName{} by modifying the SDE solver (e.g. Euler-Maruyama) such that each step is followed with an intervention assignment $\latentvars_t \gets \intv(t, \latentvars_t)$.
\end{document}